\tikzstyle{arg}=[draw, thick, circle]
\appto\UrlBreaks{\do\a\do\b\do\c\do\d\do\e\do\f\do\g\do\h\do\i\do\j
\do\k\do\l\do\m\do\n\do\o\do\p\do\q\do\r\do\s\do\t\do\u\do\v\do\w
\do\x\do\y\do\z}
\newtheorem{definition}{Definition} % [section]
\newtheorem{example}{Example} % [section]
\declaretheorem{thm}
\declaretheorem[name=Proposition,sibling=thm]{prop}
\newcommand{\SigmaP}[1]{{\rm \Sigma}_{#1}^{P}}
\newcommand{\PiP}[1]{{\rm \Pi}_{#1}^{P}}
\newcommand{\U}{{\ensuremath{\cal U}}}
\newcommand{\cf}{\mathit{cf}}
\newcommand{\stable}{{\mathit{stb}}}
\newcommand{\adm}{\mathit{adm}}
\newcommand{\pref}{\mathit{prf}}
\newcommand{\stage}{\mathit{stage}}
\newcommand{\semi}{\mathit{sem}}
\newcommand{\AF}{F}
\newcommand{\NP}{\mbox{\rm NP}}
\newcommand{\naf}{{\it not}\,}
\newcommand{\la}{\leftarrow}
\newcommand{\aspmodule}[1]{\pi_{\mathit{#1}}}
\newcommand{\answersets}{\mathcal{AS}}
\newcommand{\aspformat}[1]{{\bf #1}}
\newcommand{\argument}{\aspformat{arg}}
\newcommand{\att}{\aspformat{att}}
\newcommand{\inA}{\aspformat{in}}
\newcommand{\outA}{\aspformat{out}}
\newcommand{\aspinf}{\aspformat{inf}}
\newcommand{\aspsucc}{\aspformat{succ}}
\newcommand{\aspsup}{\aspformat{sup}}
\newcommand{\undefended}{\aspformat{undefended}}
\newcommand{\defeated}{\aspformat{defeated}}
\newcommand{\spoil}{\aspformat{spoil}}
\newcommand{\ecl}{\aspformat{witness}}
\newcommand{\nontrivial}{\aspformat{nontrivial}}
\newcommand{\ok}{\aspformat{unstable}}
\newcommand{\rge}{\aspformat{range}}
\newcommand{\nrge}{\aspformat{out\_of\_range}}
\newcommand{\inn}{\aspformat{inN}}
\newcommand{\outn}{\aspformat{outN}}
\newcommand{\BU}{{\ensuremath{B_{\cal U}}}}
\newcommand{\head}[1]{H(#1)}
\newcommand{\body}[1]{B(#1)}
\newcommand{\bodyp}[1]{B^{+}(#1)}
\newcommand{\bodyn}[1]{B^{-}(#1)}
\newcommand{\UP}{\ensuremath{U_{\pi}}}
\newcommand{\GP}{\ensuremath{Gr(\pi)}}
\newcommand{\Gr}{\ensuremath{Gr}}
\newcommand{\AS}{\mathcal{AS}}
\newcommand{\enf}{\hat{\AF}}
\newcommand{\encf}{\aspmodule{\cf}}
\newcommand{\encdefense}{\aspmodule{\mathit{def}}}
\newcommand{\enadm}{\aspmodule{\adm}}
\newcommand{\enadmp}{\aspmodule{\adm}^{+}}
\newcommand{\enpreftwo}{\aspmodule{\pref ^2}}
\newcommand{\enstagetwo}{\aspmodule{\stage ^2}}
\newcommand{\ensemitwo}{\aspmodule{\semi ^2}}
\newcommand{\ensatpreftwo}{\aspmodule{sat\pref ^2}}
\newcommand{\ensatsemitwo}{\aspmodule{sat\semi ^2}}
\newcommand{\enrange}{\aspmodule{range}}
\lstdefinelanguage{asp}
{morekeywords={in,out,arg,d,att,cancel,succeed,in_range,rs,attacked,lt,nsucc,succ,inf,ninf,nsup,sup,rsinit,todef,unattacked_upto,unattacked,remove,defeated,nontrivial,spoil,undefended,ecl,rge,range,lrge,larger_range,nrge,out_of_range,ok,unstable,witness,eq_upto,eq,outN,inN}, %% add here further asp predicates, they will be highlighted (not within comments)
% sensitive=false,
literate={:-}{{$\la$}}1 {not}{{$\naf$}}2 {\\el}{{$\in$}}1 {\\cup}{{$\cup$}}1,
morecomment=[l]{\%},
}
\let\@ORGmakecaption\@makecaption
\long\def\@makecaption#1#2{\@ORGmakecaption{#1}{#2}\vskip\belowcaptionskip\relax}
\begin{document}
\bibliographystyle{acmtrans}

\long\def\comment#1{}

%% In order to avoid footnotes splitting
%\interfootnotelinepenalty=10000

%\submitted{2015}\revised{}\accepted{}
\title{%
Improved Answer-Set Programming Encodings for Abstract Argumentation
}

\author[S. A. Gaggl, N. Manthey, A. Ronca, J. P. Wallner, and S. Woltran]
{Sarah A. Gaggl, Norbert Manthey \\
Technische Universit\"at Dresden, Germany
\and
Alessandro Ronca\\
La Sapienza, University of Rome
\and 
Johannes P. Wallner\\
HIIT, Department of Computer Science, University of Helsinki, Finland
\and
Stefan Woltran\\
Vienna University of Technology, Austria
}

% \pagerange{\pageref{firstpage}--\pageref{lastpage}}
% \volume{\textbf{10} (3):}
% \jdate{March 2002}
% \setcounter{page}{1}
% \pubyear{2002}

\maketitle

\label{firstpage}

\begin{abstract}
The design of efficient solutions for abstract argumentation problems is 
a crucial step towards advanced argumentation systems. One of the most
prominent approaches in the literature is to use 
Answer-Set Programming (ASP) for this endeavor.
%abstract argumentation has been proposed by several authors.
%One of the advances of ASP is the capability of expressing
%problems which are complete for the second level of the polynomial 
%hierarchy.
In this paper, 
we present new encodings for three prominent argumentation
semantics using the concept of conditional literals in disjunctions as provided by the 
ASP-system
clingo. Our new encodings are not only more succinct than previous
versions, but also outperform them on standard benchmarks.
\end{abstract}

\begin{keywords}
Answer-Set Programming, Abstract Argumentation, Implementation, ASPARTIX
\end{keywords}

\section{Introduction}

Abstract Argumentation \cite{Dung1995,RahwanS2009}
is at the heart of many advanced argumentation systems~\cite{BesnardH2008,CaminadaA2007} 
and is concerned with finding jointly acceptable arguments %on an abstract level 
by taking only their inter-relationships into account.
Efficient solvers for abstract argumentation are thus an important development, 
a fact that is also witnessed by a new competition
which takes place in 2015 for the first time~\cite{CeruttiOSTV14}\footnote{See \url{http://argumentationcompetition.org} for further information.}.

To date, several approaches 
for implementing abstract argumentation
exist, many of them following the so-called reduction-based (see \cite{CharwatDGWW15})
paradigm: 
% \margintodo{change column or capital}
hereby, existing efficient software which has
originally been developed for other purposes is used. 
Prominent examples for this approach 
are 
(i) the CSP-based system ConArg~\cite{BistarelliS11a},
(ii) SAT-based approaches (e.g.\ \cite{CeruttiGV14a,DvorakJWW2014}) 
and 
(iii)
systems which rely on Answer-Set Programming (ASP); see 
\cite{Toni11} for a comprehensive survey.
In fact, ASP \cite{BrewkaET11} is 
particularly well-suited since 
ASP systems by default enumerate all solutions of a given program, thus
enabling the enumeration of extensions of an abstract argumentation framework
in an easy manner. Moreover, 
disjunctive ASP is capable of expressing problems being even complete for the 2nd level of the polynomial hierarchy.
In fact, several 
semantics for abstract argumentation like preferred, semi-stable~\cite{CaminadaCD12}, or stage~\cite{Verheij96} are of this high complexity \cite{DunneB02,DvorakW10}.

One particular candidate for an ASP reduction-based system is 
ASPARTIX \cite{EglyGW2010,DvorakGWW2013}. Here, a fixed program for each semantics is provided and the argumentation framework under consideration is just
added as an input-database. The program together with the input-database is then handed over to an ASP system of choice in order to calculate the extensions. This makes the ASPARTIX approach easy
to adapt and an appealing rapid-prototyping method. The proposed
encodings in ASPARTIX for the high-complexity semantics mentioned above come, however, with a certain caveat. This stems from the fact that encodings 
for such complex programs have to follow a certain saturation pattern, 
where restricted use of cyclic negation has to be taken care of
(we refer to \cite{EglyGW2010} for a detailed discussion). The original
encodings followed the definition of the semantics quite closely and
thus resulted in quite complex and tricky loop-techniques  which 
are a known feature for ASP experts, but hard to follow for ASP laymen.
Moreover, experiments in other domains indicated that such loops
also potentially lead to performance bottlenecks.

In this work, we thus 
aim for new and simpler encodings for the three semantics of preferred, semi-stable, and stage extensions. To this end, we provide some alternative characterizations for these semantics and design our new encodings along these characterizations in such a way that costly loops are avoided. 
Instead we make use of the ASP language feature of conditional literals in disjunction~\cite{syrjanen09a,PotasscoUserGuide}. 
%% We give detailed proofs of the correctness of the encodings %for the preferred
%% %semantics and sketch the proofs for the other semantics due to space limitation.
%% %TODO: tech-report
%% (due to space restrictions, we refer to~\cite{DBAI-TR-2015-93} for some of the proofs).
%% % (due to space restrictions, some proofs are moved to an appendix). 
Moreover,
we perform exhaustive 
experimental evaluation 
against the original ASPARTIX-encodings, the ConArg system, and another
ASP-variant \cite{DvorakGWW2013}
which makes use of the ASP front-end \emph{metasp}~\cite{GebserKS11}, where the required
maximization is handled via meta-programming.
Our results show that the new ASP encodings not only outperform the 
previous variants, but also makes  ASPARTIX more powerful than ConArg. 

% \margintodo{There is a lot of blank space left below. Maybe we can put the link only in the evaluation section}
The novel encodings together with the benchmark instances are available under
\url{http://dbai.tuwien.ac.at/research/project/argumentation/%
systempage/#conditional}. 
% For some of the proofs in this paper we refer to the online appendix (Appendix A). 
% An extended version of the paper containing all proofs 
% is available as a technical report~\cite{DBAI-TR-2015-93}.
\paragraph{Acknowledgements}
This work has been funded by the Austrian Science Fund (FWF) through projects Y698 and I1102, by the German Research Foundation (DFG) through project HO 1294/11-1, and by Academy of Finland through grants 251170 COIN and 284591.  
%
%In the appendix, we provide additional material for the reviewers: 
%(i) proofs for preliminary results;
%(ii) full proof for the semi-stable encoding, and proof sketch for the stage encoding. 
% (iii) further details on the experiments.   {\bf CHECK if needed.}
%% We give detailed proofs of the correctness of the encodings %for the preferred
%% %semantics and sketch the proofs for the other semantics due to space limitation.
%% %TODO: tech-report
%% (due to space restrictions, we refer to~\cite{DBAI-TR-2015-93} for some of the proofs).
%% % (due to space restrictions, some proofs are moved to an appendix). 
%
%
%
\section{Background}

%In this section we provide the necessary background of abstract argumentation and answer-set programming for developing our novel encodings.
% 
% ASP and ARGU maybe similar to INAP paper ... 
% with some additional focus on conditional disjunction.
% 
% Present one of the old encodings and discuss why loops
% are required here.

%%%%%%%%%%%%%%%%%%%%% INAP 11 COPY %%%%%%%%%%%%%%%%%%%
\subsection{Abstract Argumentation}

First, we recall the main formal ingredients for argumentation frameworks~\cite{Dung1995,Baroni2011} %,  %some of the most important 
%the preferred 
%the semantics we study in this paper~\cite{Baroni2011}, 
and survey relevant complexity results (see also~\cite{DunneW09}).
%for such frameworks, %,DungMT07,Caminada06,Verheij96}
%for (abstract) argumentation frameworks
%(for an overview see also~\cite{Baroni09})
% Moreover,
% we highlight complexity results
% for typical decision problems associated 
% to such frameworks.

\begin{definition}\label{def:af}
An {\em argumentation framework (AF)} is a pair $\AF=(A,R)$ where $A$ 
%\subseteq \U$ is a 
is a set of arguments and $R \subseteq A \times A$ is the attack relation.
% For a given AF $\AF=(A,R)$ we use $A_\AF$ to denote the set $A$ of its arguments
% and $R_\AF$ to denote its attack relation $R$.
The pair 
$(a,b) \in R$ means that $a$ attacks %(or defeats) 
$b$. 
% We sometimes use the notation 
% $a\attacks^{R} b$ instead of $(a,b)\in R$.
% For $S\subseteq A$ and $a\in A$, 
% we also write $S\attacks^{R} a$ (resp.\ $a\attacks^{R} S$) 
% in case there exists an argument $b\in S$, such that
% $b\attacks^{R} a$ (resp.\ $a\attacks^{R} b$).
% In case no ambiguity arises, we use $\attacks$ instead of $\attacks^R$.
%A set $S\subseteq A$ of arguments \emph{defeats} $b$ (in $\AF$), 
%if there is an $a\in S$, such that $(a,b)\in R$.
% \begin{definition}
% Given an AF $\AF=(A,R)$,
An argument $a \in A$ is {\em defended} 
% (in $\AF$) %of arguments, 
by a set $S \subseteq A$ 
if, for each $b \in A$ such that $(b,a) \in R$, 
there exists a $c \in S$ such that $(c,b) \in R$.
We 
define 
the \emph{range of $S$} (w.r.t.\ $R$) as $S^{+}_{R}=S\cup \{ x\mid \exists y\in S \text{ such that } (y,x) \in R\}$. 
 \end{definition}

% \vspace{-5pt}
% An AF can naturally be represented as a directed graph.
%

Semantics for argumentation frameworks 
are given via a
function $\sigma$ which assigns 
to each AF $\AF=(A,R)$ 
a set
$\sigma(\AF)\subseteq 2^A$ of extensions. 
We shall consider here for 
$\sigma$ the functions
$\stable$, $\adm$, $\pref$,  
$\stage$, and $\semi$ which 
stand for 
stable, admissible, preferred, 
stage, and  semi-stable semantics respectively.
%
% Towards the definition of these semantics we have to introduce two 
% more formal concepts.

%We are now ready to define semantics for AFs:

\begin{definition}\label{def:semantics}
Let $\AF=(A,R)$ be an AF.  A set $S\subseteq A$ is 
{\em conflict-free (in $\AF$)}, 
  if there are no %arguments 
$a, b \in S$, such that $(a,b) \in R$.
$\cf(\AF)$ denotes the collection of conflict-free sets of $\AF$.
For a conflict-free set $S \in \cf(\AF)$, it holds that
%Such a conflict-free set $S$ is called 
\begin{itemize}
\item  
$S\in\stable(\AF)$, 
if  $S^{+}_{R} = A$;
\item 
$S\in\adm(\AF)$, 
if  each $s\in S$ is defended by $S$; %$S \subseteq \charF(S)$;
% \item  
% $S\in\comp(\AF)$, 
% if $S=\charF(S)$;
% \item 
% $S\in\ground(\AF)$, 
% if $S\in\comp(\AF)$ and 
% there is no $T\in\comp(\AF)$ with $T\subset S$;
\item 
$S\in\pref(\AF)$, 
if $S\in\adm(\AF)$ and 
there is no $T\in\adm(\AF)$ with $T\supset S$;
\item 
$S\in\semi(\AF)$, if
$S\in\adm(\AF)$
and there is no 
$T\in\adm(\AF)$ with
$T^{+}_{R}\supset S^{+}_{R}$;
\item 
$S\in\stage(\AF)$,
if there is no $T\in\cf(\AF)$ in $\AF$, 
such that
$T^{+}_{R}\supset S^{+}_{R}$.
\end{itemize}
% For all semantics $\sigma$, the sets defined above are the only ones in $\sigma(\AF)$.
\end{definition}
% We recall that for each AF $\AF$, 
% $
% \stable(\AF)\subseteq \semi(\AF)\subseteq \pref(\AF)\subseteq \comp(\AF)\subseteq \adm(\AF)
% $ 
% holds,
% and that for each of the considered semantics $\sigma$ except stable semantics,
% $\sigma(\AF)\neq \emptyset$ holds. 
% Moreover 
% the grounded semantics yields a unique extension, 
% the grounded extension, which is the least fixed point of the characteristic
% function $\charF$.
% 
% 
\begin{example}\label{example:AF}
 Consider the AF $F=(A,R)$ with
    $A=\{a,b,c,d,e,f\}$ and
    $R=\{(a,b)$, $(b,d)$, $(c,b)$, $(c,d)$, $(c,e)$, $(d,c)$, $(d,e)$, $(e,f)\}$,
and the graph representation of $F$: 
% is given as follows:

\begin{center}
% \small
\begin{tikzpicture}[scale=1.4,>=stealth']
		\path 	node[arg](a){$a$}
			++(1,0) node[arg,inner sep=3](b){$b$}
			++(1,.4) node[arg](c){$c$}
			++(0,-0.8) node[arg,inner sep=2.8](d){$d$}
			++(1,0.4) node[arg](e){$e$}
			++(1,0) node[arg,inner sep=1.8](f){$f$};
			;
		\path [left,->, thick]
			(a) edge (b)
			(c) edge (b)
			(b) edge (d)
			(d) edge (e)
			(c) edge (e)
			(e) edge (f)
			;
		\path [bend left, left, above,->, thick]
			(c) edge (d)
			(d) edge (c);
\end{tikzpicture}
\end{center}
% 
% \centering
% \begin{tikzpicture}[scale=1.4,>=stealth']
% 		\path 	node[arg](a){$a$}
% 			++(1,0) node[arg](b){$b$}
% 			++(1,0) node[arg](c){$c$}
% 			++(1,0) node[arg](d){$d$}
% 			++(1,0) node[arg](e){$e$};
% 		\path [left,->, thick]
% 			(a) edge (b)
% 			(c) edge (b)
% 			(d) edge (e);
% 		\path [bend left, left, above,->, thick]
% 			(c) edge (d)
% 			(d) edge (c);
% % 	 	\draw [loop right, thick] (e) edge (e);
% 	 	\draw [loop, thick] (e) edge (e);
% \end{tikzpicture}
% \includegraphics[scale=0.3]{expl1.png}
%\end{example}
%\begin{example}\label{example:semantics}
%Consider the AF $\AF$ from~\Cref{example:AF}. 
%
We have %$\{a,d,f\}$ and $\{a,c,f\}$ as the stable extensions and thus %that 
$\stable(F)=\stage(F)=\semi(F)=\{\{a,d,f\},\{a,c,f\}\}$. 
The admissible sets of $F$ are
$\emptyset$, $\{a\}$, $\{c\}$, $\{a,c\}$, $\{a,d\}$, $\{c,f\}$, $\{a,c,f\}$, $\{a,d,f\}$, 
and $\pref(F)=\{\{a,c,f\}$,$\{a,d,f\}\}$.
%
% Finally we have  $\comp(F)=\{\{a\}$, $\{a,c,f\}$, $\{a,d,f\}\}$, with $\{a\}$ being the grounded extension.
\end{example}

We recall that each AF $F$ possesses at least one preferred, semi-stable, and stage extension, while $\stable(F)$ might be empty. However, it is well known 
that $\stable(F)\neq\emptyset$ implies 
$\stable(F)=\stage(F)=\semi(F)$ as also seen in the above example.

Next, we provide some alternative characterisations for the
semantics of our interest. They will serve as the basis of our encodings.

The alternative characterisation for preferred extensions relies
on the following idea. An admissible set $S$ is preferred, if each other
admissible set $E$ (which is not a subset of $S$) is in conflict with $S$.

\begin{prop}
\label{prop:2}
Let $F=(A,R)$ be an AF and $S\subseteq A$
be admissible in $F$. Then, %the following are equivalent:
$S\in\pref(F)$ if and only if,
for each $E\in\adm(F)$ such that $E\not\subseteq S$,
$E\cup S\notin\cf(F)$.
\end{prop}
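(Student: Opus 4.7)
The plan is to prove both directions directly, with the forward direction resting on the standard fact that admissibility is preserved under unions provided the union remains conflict-free, and the backward direction being a one-step contradiction.

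For the \emph{only if} direction, I assume $S \in \pref(F)$ and take an arbitrary $E \in \adm(F)$ with $E \not\subseteq S$. I argue by contradiction: if $E \cup S \in \cf(F)$, then I claim $E \cup S \in \adm(F)$. Indeed, every $s \in S$ is defended by $S \subseteq E \cup S$ (since $S$ is admissible), and every $e \in E$ is defended by $E \subseteq E \cup S$ (since $E$ is admissible); defense is clearly monotone in the defending set, so every element of $E \cup S$ is defended by $E \cup S$. Combined with conflict-freeness, this yields admissibility. But $E \not\subseteq S$ means $E \cup S \supsetneq S$, contradicting the maximality of $S$ among admissible sets.

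For the \emph{if} direction, I assume $S \in \adm(F)$ and that the stated property holds. To show $S \in \pref(F)$, suppose for contradiction there exists $T \in \adm(F)$ with $T \supsetneq S$. Then $T \not\subseteq S$, so the hypothesis applied with $E := T$ gives $T \cup S = T \notin \cf(F)$. But $T$ is admissible and hence conflict-free, a contradiction. Thus no such $T$ exists and $S$ is preferred.

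I expect no real obstacle; the only point that needs a short remark is the monotonicity of the defense relation (adding arguments to a defending set cannot destroy defense), which justifies that $E \cup S$ inherits admissibility from $E$ and $S$ once conflict-freeness is assumed. Everything else is bookkeeping around the definitions of $\adm(F)$ and $\pref(F)$ recalled in Definition~\ref{def:semantics}.
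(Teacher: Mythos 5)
Your proof is correct and follows essentially the same route as the paper: the forward direction hinges on the fact that a conflict-free union of self-defending sets is admissible (which the paper cites from the literature, while you prove it inline via monotonicity of defense), and the backward direction is the same observation that a strictly larger admissible $T$ satisfies $T \not\subseteq S$ and $T \cup S = T \in \cf(F)$. No gaps; the inline justification of the union lemma is a fine substitute for the paper's citation.
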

\begin{proof}
%(1)$\Rightarrow$(2): 
Let $S\in\adm(F)$ and 
assume there exists an admissible (in $F$) set  $E\not\subseteq S$, such that $E\cup S\in\cf(F)$.  It is well known (see, e.g. \cite{DunneDLW14}, Lemma 1) that if two sets $E_1,E_2$ defend themselves in an AF $F$, then also $E_1\cup E_2$ defends itself in $F$.
It follows that $E\cup S\in\adm(F)$ and by assumption $S\subset E\cup S$. Thus, $S\notin\pref(F)$.
%(2)$\Rightarrow$(1): 
For the other direction, let $S\in\adm(F)$ but $S\notin\pref(F)$.
%Since $S$ is not preferred in $F$, 
Hence, there exists an $S'\supset S$ such that $S'\in\adm(F)$.
Clearly, $S'\not\subseteq S$ but $S'=(S\cup S')\in\cf(F)$. %thus violating (2).
\end{proof}

We turn to semi-stable and stage semantics. 
In order to verify
whether a candidate extension $S$ is a stage (resp.\ semi-stable) extension 
of an AF $F$, %it
%is sufficient to 
we
check whether for any set 
$S'$ such that $S'\supset S^{+}_R$ 
there is no conflict-free (resp. admissible) set $E$
such that $S'\subseteq E^{+}_R$.  We also show that is sufficient to check this for minimal such 
sets $S'$.
Observe that the above check 
is trivially true if $S$ is already stable, mirroring the observation
that 
$\stable(F)=\stage(F)=\semi(F)$ whenever $\stable(F)\neq\emptyset$.

\begin{definition}
Let $F=(A,R)$ be an AF and $S\subseteq A$. 
A \emph{cover} of $S$ in $F$ is any 
$E\subseteq A$ such that 
$S
\subseteq 
E^{+}_R$. 
The set of covers of $S$ in $F$ is denoted by $\Gamma_F(S)$.
\end{definition}

\begin{prop}
\label{prop:1}
Let $F=(A,R)$ be an AF and $S\in\cf(F)$ (resp.\ $S\in\adm(F))$. The
following propositions are equivalent:
(1) $S$ is a stage (resp.\ semi-stable) extension of $F$;
(2) for each $a\in A\setminus S^{+}_R$, there is no $E\in\Gamma_F(S^{+}_R\cup \{a\})$ 
such that $E\in\cf(F)$ (resp.\ $E\in\adm(F)$;
(3) for each $S'$ 
with $S^{+}_R\subset S'\subseteq A$, 
there is no %conflict-free (in $F$) 
$E\in\Gamma_F(S')$,
such that $E\in\cf(F)$ (resp.\ $E\in\adm(F)$).
%such that 
%\todo{} remove it, if we use \Cref{prop:1-1}.
\end{prop}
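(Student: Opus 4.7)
The plan is to establish the cycle of implications $(1) \Rightarrow (3) \Rightarrow (2) \Rightarrow (1)$, or more symmetrically, to show $(2) \Leftrightarrow (3)$ and $(1) \Leftrightarrow (3)$ separately. Both directions go through by direct set-inclusion arguments, so there is no genuinely hard step—everything reduces to unwinding the definition of cover and of $S^{+}_R$. Throughout, the conflict-free case and the admissible case can be handled in parallel, since all we use about the property $\mathcal{P} \in \{\cf(F), \adm(F)\}$ is that we are quantifying over sets satisfying~$\mathcal{P}$.

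First I would dispatch $(2) \Leftrightarrow (3)$. The implication $(3) \Rightarrow (2)$ is immediate, because every set of the form $S^{+}_R \cup \{a\}$ with $a \in A \setminus S^{+}_R$ is a particular $S'$ satisfying $S^{+}_R \subset S' \subseteq A$. For $(2) \Rightarrow (3)$, given $S'$ with $S^{+}_R \subset S' \subseteq A$, pick any $a \in S' \setminus S^{+}_R$; then $S^{+}_R \cup \{a\} \subseteq S'$, so any cover $E$ of $S'$ (i.e., $S' \subseteq E^{+}_R$) is also a cover of $S^{+}_R \cup \{a\}$, and hypothesis $(2)$ excludes $E \in \mathcal{P}$.

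Next, $(1) \Leftrightarrow (3)$. For $(1) \Rightarrow (3)$, assume towards a contradiction that there is some $S'$ with $S^{+}_R \subset S' \subseteq A$ and some $E \in \Gamma_F(S')$ with $E \in \mathcal{P}$. Then $S^{+}_R \subset S' \subseteq E^{+}_R$, so $E^{+}_R \supset S^{+}_R$, contradicting $S \in \stage(F)$ (respectively $S \in \semi(F)$). For $(3) \Rightarrow (1)$, assume $S \notin \stage(F)$ (resp.\ $S \notin \semi(F)$); then there exists $T \in \mathcal{P}$ with $T^{+}_R \supset S^{+}_R$. Setting $S' := T^{+}_R$, one has $S^{+}_R \subset S' \subseteq A$ and $T \in \Gamma_F(S')$ (since trivially $S' \subseteq T^{+}_R$), with $T \in \mathcal{P}$—contradicting $(3)$.

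The only mildly delicate point, and the one I would double-check, is that the argument really is uniform across the two variants: in the $(3) \Rightarrow (1)$ step one must make sure the witness $T$ produced by the failure of the stage/semi-stable condition has exactly the property used in $\Gamma_F(S')$, namely $S' \subseteq T^{+}_R$. This is immediate from the choice $S' = T^{+}_R$, so nothing further is needed. No auxiliary lemma beyond the definitions in Definitions~\ref{def:af}~and~\ref{def:semantics} is required.
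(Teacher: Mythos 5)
Your proof is correct and takes essentially the same route as the paper, which establishes the cycle $(1)\Rightarrow(3)\Rightarrow(2)\Rightarrow(1)$ by the same definition-unwinding: your $(1)\Rightarrow(3)$ is identical, your $(3)\Rightarrow(2)$ is the same trivial specialization, and your $(3)\Rightarrow(1)$ via $S' = T^{+}_R$ is a minor repackaging of the paper's $(2)\Rightarrow(1)$, which instead picks $a \in T^{+}_R \setminus S^{+}_R$ so that the witness $T$ covers $S^{+}_R \cup \{a\}$. Your extra implication $(2)\Rightarrow(3)$ is redundant given the cycle but harmless.
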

\begin{proof}
We give the proof for stage extensions. The result for semi-stable  proceeds analogously.
(1)$\Rightarrow$(3): Suppose there is an $S'$ 
with $S^{+}_R\subset S'\subseteq A$, such that some
$E\in\Gamma_F(S')$ is
conflict-free in $F$. By definition, 
$
S^{+}_R
\subset 
S' 
\subseteq 
E^{+}_R
$. 
Hence, $S\notin\stage(F)$.
(2)$\Rightarrow$(1): Suppose $S\notin\stage(F)$. Thus there exists $T\in\cf(F)$ with $S^{+}_R\subset T^{+}_R$. Let $a\in T^{+}_R \setminus S^{+}_R$. It follows 
that $T\in\Gamma_F(S\cup\{a\})$. %and by assumption $T$ is conflict-free.
(3)$\Rightarrow$(2) is clear.
\end{proof}

Finally,
we turn to the complexity of reasoning in AFs for two major decision problems. 
% To this end, we define 
% the following decision problems for a given AF $F$, an argument $a \in F$, and for semantics $\sigma$. %introduced in 
% Definitions~\ref{def:semantics} and \ref{def:resbased}:
% \vspace{-1pt}
% \begin{itemize}
% 
% \item 
For a given AF $F=(A,R)$ and an argument $a\in A$, credulous reasoning under $\sigma$ denotes the problem of deciding whether there exists an $E \in \sigma(F)$ s.t.\ $a \in E$. %(the queried argument is contained in at least one $\sigma$-extension). 
%$\CRED[\sigma]$: Given AF $\AF=(A,R)$ and an argument $a\in A$. Is $a$ contained in some $S\in \sigma(\AF)$?
% \item 
Skeptical Acceptance under $\sigma$ is the problem of deciding whether for all $E \in \sigma(F)$ it holds that $a \in E$. %(the queried argument is contained in all $\sigma$-extensions). 
% $\SKEPT[\sigma]$: Given AF $\AF=(A,R)$ and an argument $a\in A$. Is $a$ contained in each $S\in \sigma(\AF)$?
% \item 
% \emph{Verification of an extension} $\VER[\sigma]$: Given AF $\AF=(A,R)$ and a set of arguments $S \subseteq A$. Is $S\in \sigma(\AF)$?
% \item \emph{Existence of an extension} $\EXISTS[\sigma]$: Given AF $\AF=(A,R)$. Is $\sigma(\AF)\not= \emptyset$?
% \item \emph{Existence of a nonempty extension} $\NONEMPTY[\sigma]$: Given AF $\AF=(A,R)$. Does there exist a set %$S \subseteq A, 
% $S \not= \emptyset$ such that $S\in \sigma(\AF)$?
% \end{itemize}
% \vspace{-1pt}
% \noindent
%
%We assume the reader has knowledge about standard complexity classes like $\Pol$, $\NP$, $\SigmaP{2}$, and $\PiP{2}$. 
%and {\sc logspace} ($\LOGSPACE$), 
%but we briefly recapitulate the concept of oracle machines and the complexity classes $\SigmaP{2}, \PiP{2}$.
%A $\NP$-oracle machine is a Turing machine which can access an oracle,
%that decides a given (sub)-problem in the class $\NP$ within one step.
% and recall that $\SigmaP{2}$
% is the class of decision problems that can be decided in polynomial time
% using a nondeterministic Turing machine with access to an $\NP$-oracle.
% The class $\PiP{2}$ is defined as the complementary class of $\SigmaP{2}$, i.e. $\PiP{2}=\co\SigmaP{2}$.
% 
Credulous reasoning for preferred semantics is $\NP$-complete, while credulous reasoning for semi-stable and stage semantics is $\SigmaP{2}$-complete. For preferred, semi-stable, and stage semantics skeptical reasoning is $\PiP{2}$-complete~\cite{Dung1995,DimopoulosT1996,DunneB02,CaminadaD2008,DvorakW10}.

% In Table~\ref{tab:compl} we summarize complexity results relevant for our work .

% \begin{table}[t!]
% \caption{Complexity of abstract argumentation (${\cal C}$-c denotes completeness for class ${\cal C}$)}
% \setlength{\tabcolsep}{0.6em}
% \renewcommand{\arraystretch}{1.5}
% \begin{center}
% \begin{tabular}{l|cccc}
% % $\stable$, $\adm$, $\comp$, $\ground$, 
% & $\pref$ & $\semi$ & $\stage$ & \\
% \hline
% $\CRED[\sigma]$ & $\NP$-c & $\SigmaP{2}$-c & $\SigmaP{2}$-c & $\NP$-c\\
% $\SKEPT[\sigma]$ & $\PiP{2}$-c & $\PiP{2}$-c & $\PiP{2}$-c & $\coNP$-c\\ 
% $\VER[\sigma]$ & $\coNP$-c & $\coNP$-c & $\coNP$-c & in $\Pol$\\
% \end{tabular}
% \end{center}
% \renewcommand{\arraystretch}{1.0}
% \label{tab:compl}
% % \vspace{-15pt}
% \end{table}

\subsection{Answer-Set Programming}
We give an overview of the syntax and semantics of disjunctive logic programs under the answer-sets semantics~\cite{GelfondL91}. %; for further background, see
% \cite{%disj_datalog,
% dlv}.

We fix a countable set $\U$ of {\em (domain) elements}, also called \emph{constants};
and suppose a total order $<$ over the domain elements.
An {\em atom} is an expression
$p(t_{1},\ldots,t_{n})$, where $p$ is a {\em predicate} of arity $n\geq 0$
and each $t_{i}$ is either a variable or an element from $\U$.
An atom is \emph{ground} if it is free of variables.
$\BU$ denotes the set of all ground atoms over $\U$.
%A {\em (classical) literal} $l$
%is an atom $a$ (in this case, it is {\em positive}), or a
%negated atom $\tneg a$ (in this case, it is {\em negative}).
%Given a literal $l$, its {\em complement} $\tneg l$
%is defined as $\tneg a$ if $l=a$ and $a$ if $l=\tneg a$.
%A set $L$ of literals is said to be {\em consistent} if, for every
%literal $l \in L$, $\tneg l \notin L$.
% 
A \emph{(disjunctive) rule} $r$ is of the form
% \begin{center}
% \vspace{-2pt}
\begin{equation}
a_1\ \vert\ \cdots\ \vert\ a_n\ \la
        b_1,\ldots, b_k,\
        \naf b_{k+1},\ldots,\ \naf b_m
\end{equation}
% \end{center}
% \vspace{-2pt}
with $n\geq 0,$ $m\geq k\geq 0$, $n+m > 0$, where
$a_1,\ldots ,a_n,b_1,\ldots ,b_m$ are
literals, and ``$\naf$'' stands for {\em default negation}.
%literals.
%We refer to ``$\neg$'' as {\em strong negation} and to ``$\naf$'' as
%{\em default negation}, or simply as {\em negation}.
The \emph{head} of $r$ is the set
$\head{r}$ = $\{a_1, \ldots, a_n\}$ and
the \emph{body} of $r$ is
$\body{r}=
\{b_1,\ldots, b_k,$ $\naf b_{k+1},\ldots,$ $\naf  b_m\}$.
Furthermore, $\bodyp{r}$ = $\{b_{1},\ldots, b_{k}\}$ and
$\bodyn{r}$ = $\{b_{k+1},\ldots, b_m\}$.
A rule $r$ is \emph{normal} if $n \leq 1$ and a %(hard) 
\emph{constraint} if $n=0$. 
A rule $r$ is \emph{safe} if each variable in $r$ occurs in $\bodyp{r}$.
A rule $r$ is \emph{ground} if no variable occurs in $r$.
A \emph{fact} is a ground rule without disjunction and empty body. 
An \emph{(input) database} is a set of facts.
A program is a finite set of disjunctive rules. 
For a program $\pi$ and an input database $D$, we often write $\pi{(D)}$ instead of $D\cup\pi$.
If each rule in a program is 
normal (resp.\ ground), we call the program normal (resp.\ ground).
% A program $\pi$ is called \emph{stratified}  if there exists an assignment $a(\cdot)$
% of integers to the predicates in $\pi$, such that for each $r\in\pi$, the following holds:
% If predicate $p$ occurs in the head of $r$ and predicate $q$ occurs 
% (i) in the positive body of $r$, then $a(p)\geq a(q)$ holds;
% (ii) in the negative body of $r$, then $a(p)> a(q)$ holds.
% Besides disjunctive and normal program, we consider here
% the class of optimization programs, i.e. normal programs 
% which additionally contain 
% $\minimize$ statements
% \begin{equation}\label{min}
% \minimize[l_1=w_1@J_1,\dots ,l_k=w_k@J_k] 
% \end{equation}
% where $l_i$ is a literal, $w_i$ an integer weight and $J_i$ an integer priority level.% for $1\leq i\leq k$. 

For any program $\pi$, let \UP{}
be the set of all constants appearing in $\pi$.
%(if no constant appears in \pi, an arbitrary constant
%is added to \UP)
%let \BP{} be the set of all ground
%literals constructible from the predicate
%symbols appearing in \pi{} and the constants of \UP{}.
%Moreover,
$\GP$ is  the set of rules
$r\sigma$
obtained by applying, to each rule 
%and weak constraint
$r\in\pi$, all possible
substitutions $\sigma$ from the variables
in $r$ to elements of $\UP{}$.
%We call $\Ground{\pi,U_P}$ the grounding of $\pi{}$, and
%write $\GP$ as a shorthand for $\Ground{\pi,U_P}$.
%$\UP{}$ is usually called the \emph{Herbrand Universe} of \pi{} and
%$\BP{}$
%the \emph{Herbrand Literal Base} of \pi{}.
%
An \emph{interpretation} $I\subseteq \BU$ 
\emph{satisfies} %by %a consistent set of literals 
%an interpretation $I$
a ground rule $r$
iff $\head{r} \cap I \neq \emptyset$ whenever
$\bodyp{r}\subseteq I$ and $\bodyn{r} \cap I = \emptyset$.
$I$ satisfies a ground program $\pi$,
if each $r\in\pi$
is satisfied by $I$.
%A ground (weak) constraint $c$ is {\em violated} by $I$,
%iff $\bodyp{c}\subseteq I$ and $\bodyn{c}\cap I=\emptyset$. 
%%it is satisfied otherwise.
A non-ground rule $r$ (resp., a program $\pi$)
is satisfied by an interpretation $I$ iff
$I$ satisfies all groundings of $r$ (resp., $\GP$).
%A non-ground (weak) constraint
%$c$ is violated by
%$I$ iff
%$I$ violates at least one grounding of $c$.
%
%For programs $\pi$ without weak constraints, 
$I \subseteq \BU$ is an \emph{answer set}
%\footnote{\label{fn}Note that we only consider {\em consistent
%answer sets}, while in \cite{gelf-lifs-91} also the inconsistent set of all
%possible literals can be a valid answer set.}
of $\pi$
iff it is a subset-minimal set
satisfying
the \emph{Gelfond-Lifschitz reduct}
$
\pi^I=\{ \head{r} \la \bodyp{r} \mid I\cap
\bodyn{r} = \emptyset, r \in \GP\}
$.
%
%For an arbitrary program $\pi{}$, $I\subseteq \BU$ is an \emph{(optimal) answer set} of \pi{} iff
%$I$ is an answer set of 
%the set $R$ of all rules in $\pi$
%such the number of violated weak constraints in $\GP$
%is minimal among all the answer sets of $R$.
%where the penalization
%$H^{\pi}(I)$
%for weak constraint violations is defined
%as follows:
%\[
%\begin{array}{l@{\ }c@{\ }l}
%H^{\pi{}}(I) & = & \sum_{i=1}^{\lmax{\pi{}}} \big( f_{\pi{}}(i) \cdot \sum_{w \in N^{\
%p{}}_i(I)} \weight(w) \big);
%\\
%f_{\pi{}}(1) & = & 1,  \ \ \mbox{and}\\
%f_{\pi{}}(n) & = & f_{\pi{}}(n-1) \cdot |\WC{\GP}| \cdot \wmax{\pi{}} + 1 \ \ \mbox{fo
%r}\ n > 1;
%\end{array}
%\]
%where
%$N^{\aprog{}}_i(I)$
%denotes the set of the weak constraints
%with level $i$
%in $\GP$
%violated
%by $I$.
For a program $\pi$,
we denote the set of its %(optimal) 
answer sets by 
$\AS(\pi)$.

Modern ASP solvers offer additional language features. Among them we make use of the \emph{conditional literal}~\cite{syrjanen09a,PotasscoUserGuide}. In the head of a disjunctive rule literals may have conditions, e.g.\ consider the head of rule ``$\aspformat{p}(X) : \aspformat{q}(X) \la$''. Intuitively, this represents a head of disjunctions of atoms $\aspformat{p}(a)$ where also $\aspformat{q}(a)$ is true.

\subsection{ASP Encodings for AFs}
For our novel encodings we utilize basic encodings for AFs, conflict-free sets, and admissible sets from~\cite{EglyGW2010}. An AF is represented as a set of facts. 
%========================================
\begin{definition}
Let $\AF =(A,R)$ be an AF. We define $\enf = \{\argument(a) \mid a \in A\} \cup \{\att(a,b) \mid ( a,b ) \in R\}$.
\end{definition}

%========================================
% Definition about correspendence
%========================================
In the following definition we first formalize the correspondence between an extension, as subset of arguments, and an answer set of an ASP encoding; then we extend it to the one between sets of extensions and answer sets respectively.
\begin{definition} \label{def:correspondence}
Let $\mathcal{S} \subseteq 2^{\U}$ be a collection of sets of domain elements and let $\mathcal{I} \subseteq 2^{\BU}$ be a collection of sets of ground atoms. 
We say that $S \in \mathcal{S}$ and $I \in \mathcal{I}$ correspond to each other, in symbols $S \cong I$,
iff $S = \{ a \mid \inA(a) \in I \}$.
We say that $\mathcal{S}$ and $\mathcal{I}$ correspond to each other, in symbols $\mathcal{S} \cong \mathcal{I}$, iff 
(i) for each $S \in \mathcal{S}$, there exists an $I \in \mathcal{I}$, such that $I \cong S$; and (ii) for each $I \in \mathcal{I}$, there exists an $S \in \mathcal{S}$, such that $S \cong I$.
\end{definition}
%========================================
% Definition about equivalence
%========================================

It will be convenient to use the following notation and result later in~\Cref{sec:encodings}.
\begin{definition}
Let $I,J \in 2^{\BU}$ be sets of ground atoms. 
We say that $I$ and $J$ are equivalent, in symbols $I \equiv J$, iff 
$\{ \inA(a) \mid \inA(a) \in I \} = \{ \inA(a) \mid \inA(a) \in J \}$. 
% 
% their projections over the predicate $\inA(\cdot)$ are equal, formally $\mathrm{proj}(I) = \mathrm{proj}(J)$, where $\mathrm{proj}(X) = \{ \inA(a) \mid \inA(a) \in X  \}$.
\end{definition}
%========================================
% Theorem about equivalence 
%========================================
% \begin{prop} \label{th:equiv}
% Let $I,J \in 2^{\BU}$ be sets of ground atoms, and $S \in 2^{\U}$ be a set of domain elements. If $I \equiv J$ and $I \cong S$, then $J \cong S$.
% \end{prop}
% \begin{proof}
% If $I \equiv J$, then $\mathrm{proj}(I) = \mathrm{proj}(J)$, then $\{ \inA(a) \mid \inA(a) \in I  \} = \{ \inA(a) \mid \inA(a) \in J  \}$, then $\{ a \mid \inA(a) \in I \} = \{ a \mid \inA(a) \in J \}$. Since $S=\{ a \mid \inA(a) \in I \}$, therefore $S=\{ a \mid \inA(a) \in J \}$, meaning $J \cong S$.
% \end{proof}
\begin{restatable}{lma}{lmaequiv} \label{th:equiv}
Let $I,J \in 2^{\BU}$, and $S \in 2^{\U}$. If $I \equiv J$ and $I \cong S$, then $J \cong S$.
\end{restatable}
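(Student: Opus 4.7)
The plan is to simply unfold the three definitions and observe that the claim reduces to an elementary set-equality. First I would write out what $I \cong S$ means: by \Cref{def:correspondence}, it says precisely that $S = \{a \mid \inA(a) \in I\}$. The goal $J \cong S$ unfolds analogously to $S = \{a \mid \inA(a) \in J\}$, so it suffices to prove the identity
\[
\{a \mid \inA(a) \in I\} \;=\; \{a \mid \inA(a) \in J\}.
\]

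Next I would use the hypothesis $I \equiv J$. By definition, this means $\{\inA(a) \mid \inA(a) \in I\} = \{\inA(a) \mid \inA(a) \in J\}$. Since $\inA$ is a unary predicate and its argument $a$ is uniquely recoverable from the ground atom $\inA(a)$, the map $\inA(a) \mapsto a$ is a bijection between these two sets and the sets displayed above. Hence the two right-hand-side sets are equal.

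Chaining the equalities then yields $S = \{a \mid \inA(a) \in J\}$, which is exactly $J \cong S$ by \Cref{def:correspondence}.

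There is no real obstacle here: the statement is a bookkeeping lemma that factors out the observation that both $\cong$ and $\equiv$ depend only on the $\inA$-atoms in an interpretation, so that $\cong$ is well-defined on $\equiv$-classes. The only thing to be careful about is distinguishing the set of $\inA$-atoms (ground atoms) from the set of their arguments (domain elements); I would be explicit about that correspondence rather than conflating them.
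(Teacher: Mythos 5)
Your proof is correct and takes essentially the same route as the paper's: both $\cong$ and $\equiv$ depend only on the $\inA$-atoms of an interpretation, and since $a \mapsto \inA(a)$ is injective on ground atoms, the hypothesis $I \equiv J$ immediately yields $\{a \mid \inA(a) \in I\} = \{a \mid \inA(a) \in J\}$, which chained with $S = \{a \mid \inA(a) \in I\}$ gives $J \cong S$. Your explicit care in distinguishing the set of ground atoms from the set of their arguments is precisely the one subtlety worth noting, and you handle it correctly.
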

%========================================
% Prop about partition
%======================================== 
% TODO changed from adm to cf, check where it is used

% % Encoding for admissible semantics
% \begin{lstlisting}[float=tb,caption=Module $\enadm$,label=asp:adm]
% in(X) :- arg(X), not out(X).§\label{line:adm-r1}§
% out(X) :- arg(X), not in(X).§\label{line:adm-r2}§
% 
% % no conflicts between arguments that are in
% :- att(X,Y), in(X), in(Y).§\label{line:adm-3}§
% 
% %% The argument x is defeated by the set S
% defeated(X) :- in(Y), att(Y,X).§\label{line:adm-r4}§
% 
% %% The argument x is not defended by S
% undefended(X) :- att(Y,X), not defeated(Y).§\label{line:adm-r5}§
% 
% %% All arguments x \in S need to be defended by S 
% :- in(X), undefended(X).§\label{line:adm-r6}§
% \end{lstlisting}

In~\Cref{asp:cf} we see the ASP encoding for conflict-free sets, while~\Cref{asp:def} shows defense of arguments. The encoding for admissible sets is given by $\enadm = \encf \cup \encdefense$. 
The following has been proven in~\cite[{{Proposition 3.2}}]{EglyGW2010}.
\begin{prop} \label{prop:partition}
For any AF $\AF=(A,R)$, and any $I \in \answersets(\encf(\enf))$, $\mathcal{P} = \{\{a \mid \inA(a) \in I\}, \{a \mid \outA(a) \in I\}\}$ is a partition of $A$.
\end{prop}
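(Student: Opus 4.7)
The plan is to verify the three properties characterising a partition of $A$: (a) every $a \in A$ carries at least one of the labels $\inA(a)$, $\outA(a)$ in $I$; (b) no $a \in A$ carries both; (c) no element outside $A$ is labelled $\inA$ or $\outA$. All three follow from inspecting the structure of $\encf$, combined with the standard subset-minimality of answer sets with respect to the Gelfond--Lifschitz reduct.

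First I would recall that the conflict-free encoding of~\cite{EglyGW2010} contains a single guessing rule of the form $\inA(X) \vert \outA(X) \leftarrow \argument(X)$, together with a conflict-constraint that does not put any $\inA$ or $\outA$ atom in the head. Because $\enf$ contains the fact $\argument(a)$ for each $a \in A$ and only for those, the grounding of the guess produces exactly one rule per $a \in A$. Any interpretation $I$ satisfying this ground rule must contain $\inA(a)$ or $\outA(a)$, establishing coverage (a), and must not contain $\inA(a)$ or $\outA(a)$ for any $a \notin A$ unless it is required by some other head; since these predicates appear in no other rule head, property (c) follows.

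For disjointness (b), I argue by contradiction using the minimality of $I$. Suppose $\{\inA(a), \outA(a)\} \subseteq I$ for some $a \in A$, and consider $I' = I \setminus \{\outA(a)\}$. The ground guess rule for $a$ is still satisfied by $I'$ because $\inA(a) \in I'$; every other ground rule of $\encf(\enf)^{I}$ is also satisfied, since $\outA(a)$ appears nowhere in a head other than the guess, nor in any body of $\encf$ that would become unsatisfied (the conflict constraint only inspects $\inA$-atoms). Hence $I'$ still satisfies the reduct $\encf(\enf)^{I}$, strictly smaller than $I$, contradicting the fact that $I$ is an answer set.

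The main obstacle is that the proof is tied to the concrete syntactic shape of the encoding in~\Cref{asp:cf}: the argument rests on $\inA$ and $\outA$ occurring as heads only in the single disjunctive guess and on the only other rule being a head-less conflict constraint. Once this inspection is carried out, the rest is a routine application of the reduct semantics. Since the result is already attributed to~\cite{EglyGW2010}, I would keep the proof short, essentially citing the encoding and noting that the three properties fall out immediately from the guess rule together with subset-minimality.
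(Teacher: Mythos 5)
Your overall strategy (coverage plus exclusivity from subset-minimality of answer sets, and absence of spurious atoms because $\inA$ and $\outA$ head no other rules) is sound, but you carry it out for the wrong program: the module $\encf$ in~\Cref{asp:cf} of this paper does \emph{not} contain a disjunctive guess $\inA(X) \,\vert\, \outA(X) \la \argument(X)$. It consists of the two normal rules $\inA(X) \la \argument(X), \naf \outA(X)$ and $\outA(X) \la \argument(X), \naf \inA(X)$, together with the headless constraint $\la \att(X,Y), \inA(X), \inA(Y)$. Since you explicitly say your argument ``rests on'' the syntactic shape of the guess, the proof as written does not apply: there is no ground disjunctive rule whose satisfaction forces $\inA(a)$ or $\outA(a)$ into $I$, and your exclusivity step (``$I' = I\setminus\{\outA(a)\}$ still satisfies the ground guess rule because $\inA(a)\in I'$'') appeals to a rule that, for this encoding, is not even present in the reduct when $\outA(a)\in I$.

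The repair is routine but mechanically different, because with default negation the reduct itself depends on $I$. For coverage: if $\inA(a)\notin I$ and $\outA(a)\notin I$ for some $a\in A$, both rules for $a$ survive the reduct as $\inA(a)\la\argument(a)$ and $\outA(a)\la\argument(a)$; since $\argument(a)$ is a fact of $\enf$, any model of the reduct must contain both atoms, contradicting the assumption. For exclusivity: if $\{\inA(a),\outA(a)\}\subseteq I$, then \emph{both} rules for $a$ are deleted from the reduct, and since neither atom occurs in any other head (the constraint is headless, so removing atoms from $I$ cannot violate it), the set $I\setminus\{\inA(a),\outA(a)\}$ still models the reduct, contradicting minimality of $I$. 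Note finally that this paper gives no proof of its own for this statement --- it cites Proposition 3.2 of \cite{EglyGW2010}, whose encoding is exactly the negation-based one above --- so keeping the proof short by citation is fine, but the inspection you do perform must match the rules actually displayed in~\Cref{asp:cf}.
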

% 
% Encodings for conflict-free semantics
\begin{lstlisting}[caption=Module $\encf$,label=asp:cf,float=tb,belowskip=-0.8 \baselineskip]
in(X) :- arg(X), not out(X).§\label{line:cf-r1}§
out(X) :- arg(X), not in(X).§\label{line:cf-r2}§
:- att(X,Y), in(X), in(Y).§\label{line:cf-3}§
\end{lstlisting}

% Encoding for admissible semantics (shortened)
\begin{lstlisting}[caption=Module $\encdefense$,label=asp:def,float=tb,belowskip=-0.8 \baselineskip]
defeated(X) :- in(Y), att(Y,X).§\label{line:adm-r4}§
undefended(X) :- att(Y,X), not defeated(Y).§\label{line:adm-r5}§
:- in(X), undefended(X).§\label{line:adm-r6}§
\end{lstlisting}

% \begin{proof}
% See proposition 3.2 in \cite{EglyGW2010}.
% \end{proof}
% Here we enunciate the main propositions about the c
Correctness of the encodings $\encf$ and $\enadm$ was proven in~\cite{EglyGW2010}.
%========================================
% Correctness of the encoding for CF semantics
%========================================
\begin{prop} \label{th:cf} \label{th:adm}
For any AF $\AF$, we have 
(i) $\cf(F) \cong \mathcal{AS}(\encf(\enf))$, and 
(ii) $\adm(F) \cong \mathcal{AS}(\enadm(\enf))$.
\end{prop}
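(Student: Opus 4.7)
The plan is to prove (i) and (ii) as separate bidirectional correspondences, since the statement follows the \emph{guess-and-check} pattern that is standard for ASP encodings of extension semantics. In both cases I would argue by first fixing a candidate interpretation of the form $I_S = \enf \cup \{\inA(a) \mid a \in S\} \cup \{\outA(a) \mid a \in A \setminus S\}$ (for $S \subseteq A$) and showing that this is the \emph{only} shape an answer set can have, using \Cref{prop:partition} for the in/out partition. That reduces the problem to matching the constraints of the encoding against the definitional conditions in \Cref{def:semantics}.

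For (i), I would first handle the direction $\cf(F) \Rightarrow \AS(\encf(\enf))$: given $S \in \cf(F)$, take $I_S$ as above and verify that (a) rules on lines \ref{line:cf-r1}--\ref{line:cf-r2} are satisfied by $I_S$ (trivially, since for every $\argument(a)$ exactly one of $\inA(a), \outA(a)$ is in $I_S$), (b) the constraint on line \ref{line:cf-3} is not violated (this is exactly conflict-freeness of $S$), and (c) $I_S$ is subset-minimal among models of the reduct $\encf(\enf)^{I_S}$. Minimality follows because, after reducing, lines \ref{line:cf-r1} and \ref{line:cf-r2} become facts $\inA(a) \la \argument(a)$ for exactly those $a$ with $\outA(a) \notin I_S$ (and vice versa), forcing every $\inA$/$\outA$ atom present in $I_S$. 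Conversely, given $I \in \AS(\encf(\enf))$, \Cref{prop:partition} gives a partition $\{S, A \setminus S\}$ with $S = \{a \mid \inA(a) \in I\}$, and the constraint on line \ref{line:cf-3} forces $S \in \cf(F)$.

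For (ii), I would build on (i) by noting that $\enadm = \encf \cup \encdefense$, and that $\encdefense$ adds only derived predicates ($\defeated$, $\undefended$) plus one constraint on line \ref{line:adm-r6}. Thus any answer set $I$ of $\enadm(\enf)$ restricted to $\inA$/$\outA$ atoms already satisfies $\encf(\enf)$, so $S = \{a \mid \inA(a) \in I\}$ is conflict-free. The key calculation is then: $\defeated(x) \in I$ iff some $y \in S$ attacks $x$, and $\undefended(x) \in I$ iff $x$ has some attacker $y$ with $\defeated(y) \notin I$, i.e.\ iff $x$ has an attacker not counter-attacked by $S$. The constraint on line \ref{line:adm-r6} therefore holds exactly when every $a \in S$ is defended by $S$, which matches $\adm(F)$. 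Conversely, for $S \in \adm(F)$, one extends $I_S$ with the uniquely determined atoms for $\defeated$/$\undefended$ and verifies minimality.

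The main obstacle is routine bookkeeping in the minimality argument for the reduct, especially after adding $\encdefense$: one must confirm that the extra derived atoms $\defeated(x), \undefended(x)$ are all forced (not optional) so that no strictly smaller model of the reduct exists. This is handled by observing that the defining rules for these predicates have empty negative body (for $\defeated$) or only negation over $\defeated$ (for $\undefended$), so after the reduct they behave as definite Horn rules whose least model is uniquely determined by $S$. Since this is exactly the argument given in~\cite{EglyGW2010} for \Cref{th:cf}--\Cref{th:adm}, the proof can be concluded by citing that work.
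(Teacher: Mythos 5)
Your proposal is correct and ends exactly where the paper does: the paper gives no in-text proof of this proposition, stating only that correctness of $\encf$ and $\enadm$ was established in \cite{EglyGW2010} (importing the partition property as Proposition~\ref{prop:partition} from Proposition~3.2 there), which is the same citation your argument concludes with. Your guess-and-check reconstruction --- fixing the in/out shape of candidate answer sets, matching the constraints of $\encf$ and $\encdefense$ against Definition~\ref{def:semantics}, and verifying reduct minimality via the definite-Horn behaviour of the $\defeated$/$\undefended$ rules --- is the standard argument of the cited work and contains no gaps.
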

%========================================
% Correctness of the encoding for admissible semantics
%========================================
% \begin{prop} \label{th:adm}
% For any AF $\AF$, $\adm(F) \cong \mathcal{AS}(\enadm(\enf))$.
% \end{prop}
%========================================
% Range encoding
%========================================

Next, we characterize the encoding $\enrange$ (\Cref{asp:rng}), which, given a module computing some extension $S$ (via $\inA$) of an AF $(A,R)$, returns its range 
$S^{+}_{R}$ 
(via $\rge$) and also collects
the arguments not contained in the range. We indicate via 
$\ok$ that $S$ is not stable, i.e.\ 
$S^{+}_{R}\subset A$.

% Encoding for range
\begin{lstlisting}[caption=Module $\enrange$,frame=lines,label=asp:rng,escapechar=§,float=tb,belowskip=-0.8 \baselineskip]
range(X) :- in(X).§\label{line:rng-r1}§
range(Y) :- in(X),att(X,Y).§\label{line:rng-r2}§
out_of_range(X) :-  not range(X),arg(X).§\label{line:rng-r3}§
unstable :-  out_of_range(X),arg(X).§\label{line:rng-r4}§
\end{lstlisting}

\begin{restatable}{lma}{proprange}
\label{prop:5}
Let $\AF=(A,R)$ be an AF, and 
$\aspmodule{}$ be a program not containing the predicates $\rge(\cdot)$, $\nrge(\cdot)$ and $\ok$. 
Let $I \subseteq B_A$ and $S \subseteq A$ s.t. $I \cong S$.
%For any $I \in \AS (\aspmodule{}(\enf ))$, let $S$ be its corresponding set of arguments.
%$S \subseteq A$ s.t. $S \cong I$. 
Furthermore let $\aspmodule{}^{+} = \aspmodule{} \cup \enrange$ and
\begin{align}
\label{eq:range-condition}
\begin{split}
I^{+} = I &\cup \{\rge(a) \mid a \in S^{+}_R\} 
\cup \{\nrge(a) \mid a \in A \setminus S^{+}_R\} 
\\ &\cup \{\ok \mid S^{+}_R \subset A\}.
\end{split}
\end{align}
%The following two propositions are equivalent: 
%(i) 
Then, $I \in \AS (\aspmodule{}(\enf))$, if and only if 
$I^{+} \in \AS (\aspmodule{}^{+}(\enf))$.
\end{restatable}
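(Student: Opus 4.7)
The plan is to apply the splitting-set theorem for disjunctive answer-set programs. By hypothesis, the predicates $\rge,\nrge,\ok$ occur neither in $\pi$ nor in $\enf$, while they are exactly the head predicates of the rules in $\enrange$; every body in $\enrange$ mentions these new predicates only on top of the ``base'' predicates $\inA,\att,\argument$. Hence $\pi(\enf)$ is the bottom and $\enrange$ the top of a legitimate split: $J\in\AS(\pi^+(\enf))$ if and only if $J=J_1\cup J_2$ with $J_1\in\AS(\pi(\enf))$ and $J_2$ an answer set of the partial evaluation of $\enrange$ obtained by fixing the base predicates according to $J_1$.

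The next step is to compute this partial evaluation explicitly. It is locally stratified in $\rge,\nrge,\ok$. From the two Horn rules for $\rge$, we obtain $\rge(a)\in J_2$ iff $\inA(a)\in J_1$ or there exists $b$ with $\inA(b)\in J_1$ and $\att(b,a)\in\enf$; using $J_1\cong S$ and the definition of range, this is precisely $a\in S^{+}_R$. In the rule for $\nrge$, the only negated atom, $\rge$, is now fully determined, so $\nrge(a)\in J_2$ iff $\argument(a)\in\enf$ and $a\notin S^{+}_R$, i.e.\ iff $a\in A\setminus S^{+}_R$. Finally $\ok\in J_2$ iff some $\nrge(a)$ is derived, i.e.\ iff $S^{+}_R\subsetneq A$. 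These three conditions match exactly the three additional terms on the right-hand side of~\eqref{eq:range-condition}, so $J_2$ is uniquely determined and $J_1\cup J_2=J_1^{+}$ in the notation of the lemma.

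Specializing $J_1=I$ then yields the desired biconditional in both directions: if $I\in\AS(\pi(\enf))$ then $I^{+}=I\cup J_2\in\AS(\pi^+(\enf))$, and conversely any element of $\AS(\pi^+(\enf))$ arises in this form, its restriction to base atoms being an answer set of $\pi(\enf)$. The main obstacle is simply the bookkeeping for the splitting theorem: one must check that no head predicate of $\enrange$ recursively feeds back into $\pi(\enf)$ (immediate from the disjointness of predicate symbols) and that the partial evaluation of $\enrange$ has the unique answer set just computed (immediate from local stratification). Both checks are routine, and once discharged the equivalence follows.
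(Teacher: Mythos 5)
Your proof is correct and follows essentially the same route as the paper's (the proof itself is deferred to the online appendix, but this is exactly the splitting-theorem methodology the paper uses for its other correctness proofs): split $\aspmodule{}^{+}(\enf)$ with bottom $\aspmodule{}(\enf)$ and top $\enrange$, observe that the partially evaluated top is stratified and hence has the unique answer set $\{\rge(a)\mid a\in S^{+}_R\}\cup\{\nrge(a)\mid a\in A\setminus S^{+}_R\}\cup\{\ok\mid S^{+}_R\subset A\}$, and recombine. The only point worth flagging is your implicit (and harmless, since it is also implicit in the lemma's statement that $I$ consists of atoms over the predicates of $\aspmodule{}$) assumption that $I$ contains no $\rge(\cdot)$, $\nrge(\cdot)$ or $\ok$ atoms, which is what licenses identifying the base part of $I^{+}$ with $I$ in the right-to-left direction.
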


\begin{lstlisting}[caption=Module $\aspmodule{eq}$,frame=lines,label=asp:eq,escapechar=§,float=tb,belowskip=-0.8 \baselineskip]
eq_upto(Y) :- inf(Y), in(Y), inN(Y).
eq_upto(Y) :- inf(Y), out(Y), outN(Y).
eq_upto(Y) :- succ(Z,Y), in(Y), inN(Y), eq_upto(Z).
eq_upto(Y) :- succ(Z,Y), out(Y), outN(Y), eq_upto(Z).
eq :- sup(Y), eq_upto(Y). 
\end{lstlisting}

% TODO: add refs
The preferred, semi-stable~\cite{EglyGW2010} and stage semantics~\cite{DvorakGWW2013} utilize the so-called \textit{saturation technique}. We sketch here the basic ideas. Intuitively, in the saturation technique  encoding for preferred semantics we make a first guess for a set of arguments in the framework, and then we verify if this set is admissible (via module $\aspmodule{adm}$). 
To verify if this set is also subset maximal admissible, a second guess is carried out via a disjunctive rule. If this second guess corresponds to an admissible set that is a proper superset of the first one, then the first one cannot be a preferred extension. 
Using the saturation technique now ensures that if all second guesses ``fail'' to be a strictly larger admissible set of the first guess, then there is one answer-set corresponding to this preferred extension.  
Usage of default negation within the saturation technique for the second guess is restricted, and thus a loop-style encoding is employed that checks if the second guess is admissible and a proper superset of the first guess. 

Roughly, a loop construct in ASP checks a certain property for the least element in a set (here we use the predicate $\aspinf(\cdot)$), and then checks this property ``iteratively'' for each (immediate) successor (via predicate $\aspsucc(\cdot,\cdot)$). If the property holds for the greatest element ($\aspsup(\cdot)$), it holds for all elements. In \Cref{asp:eq} we illustrate loop encodings, where we see a partial ASP encoding used for preferred semantics in~\cite{EglyGW2010} that derives \aspformat{eq} 
if the first and second guesses are equal, i.e.\ the predicates corresponding to the guesses via $\inA(\cdot)$, resp.\ $\outA(\cdot)$, and $\inn(\cdot)$, resp. $\outn(\cdot)$, are true for the same constants. 
% if an argument $X$ is undefeated w.r.t.\ the second guess denoted by \aspformat{inN}, resp.\ \aspformat{outN}, and not defeated by the first guess. 

Another variant of ASP encodings for preferred, semi-stable and stage semantics is developed by~\cite{DvorakGWW2013}. There so-called meta-asp encodings are used, which allow for minimizing statements w.r.t.\ subset inclusion directly in the ASP language~\cite{GebserKS11}.
For instance, $\aspmodule{adm}$ can then be augmented with a minimizing statement on the predicate \aspformat{out}, to achieve an encoding of preferred semantics.

\section{Encodings}
\label{sec:encodings}

Here we present our new encodings for preferred, semi-stable, and stage semantics via the novel characterizations. 

\subsection{Encoding for Preferred Semantics}

The encoding for preferred semantics is given by $\enpreftwo = \enadm \cup \ensatpreftwo$, where $\ensatpreftwo$ is provided in~\Cref{asp:pref-alt}. 
%{\bfseries ToDo}: explanation of encoding.  
We first give the intuition of the program. 
A candidate $S$ for being preferred in an AF $F=(A,R)$ is computed by the program $\enadm$
via the $\inA(\cdot)$ predicate, and is already known admissible. If all arguments
in $A$ are contained in $S$ we are done\footnote{Note, this is only the case when there are no attacks in $F$.}. 
 Otherwise, the remainder of the program
$\ensatpreftwo$
(\Cref{line:pr-r02,line:pr-r7}) is used to check whether there exists a set 
$E\in\adm(F)$ such that $E\not\subseteq S$ and not in conflict with $S$. 
We start to build $E$ by 
guessing some argument not contained in $S$ (\Cref{line:pr-r02}) and then in \Cref{line:pr-r03} we repeatedly add further arguments to $E$ unless the set defends itself (otherwise we eventually derive
$\spoil$). 
%Now Proposition~\ref{prop:2} comes into play,
Then, we check whether $E$ is conflict-free (\Cref{line:pr-r4}) and $E$ is not in conflict with $S$ (\Cref{line:pr-r5}). 
If we are able to reach this point without deriving $\spoil$, then the candidate $S$ cannot be an answer-set
 (\Cref{line:pr-r7}). This is in line with 
Proposition~\ref{prop:2}, which states that in this case $S$ is not preferred.
%If both applies, we have not derived $\spoil$,  but then the 
%candidate cannot be an answer-set (due to rule at \Cref{line:pr-r7}), which is in line with 
%Proposition~\ref{prop:2} which states that in this case $S$ is not preferred.

By inspecting~\Cref{asp:pref-alt} we also see important differences w.r.t.\ the encodings for preferred semantics of~\cite{EglyGW2010}. In our new encodings, the ``second guess'' via predicate $\ecl(\cdot)$ is constructed through conditional disjunction instead of simple disjunction. Usage of the former allows to construct the witness set already with defense of arguments in mind. 
Furthermore loops, such as the one shown in~\Cref{asp:eq} that checks if the second guess is equal to first one or a loop construct that checks if every argument is defended, can be avoided, since these checks are partially incorporated into~\Cref{line:pr-r02} of~\Cref{asp:pref-alt} and into simpler further checks. %, since unlike the previous encodings. %, we construct the second guess in a way that is tailored to preferred semantics. 

% Encoding for preferred semantics (only saturation module)
\begin{lstlisting}[float=tb,caption=Module $\ensatpreftwo$,frame=lines,label=asp:pref-alt,escapechar=§,belowskip=-0.8 \baselineskip]
nontrivial :- out(X).§\label{line:pr-r01}§
witness(X):out(X) :- nontrivial.§\label{line:pr-r02}§
spoil | witness(Z):att(Z,Y) :- witness(X), att(Y,X).§\label{line:pr-r03}§
spoil :- witness(X), witness(Y), att(X,Y).§\label{line:pr-r4}§
spoil :- in(X), witness(Y), att(X,Y).§\label{line:pr-r5}§
witness(X) :- spoil, arg(X).§\label{line:pr-r6}§
:-  not spoil, nontrivial.§\label{line:pr-r7}§
\end{lstlisting}
%========================================
% First theorem about preferred semantics
%========================================

Correctness of this new encoding is stated and proved in the following proposition. 

\begin{prop}
\label{prop:pref}
For any AF $\AF$, we have $\pref(\AF) \cong \AS (\enpreftwo (\enf))$.
\end{prop}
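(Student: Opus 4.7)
The plan is to combine the characterization of preferred extensions from Proposition~\ref{prop:2} with the correctness of $\enadm$ from Proposition~\ref{th:adm}, via a standard saturation argument.

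For the forward direction, assume $S\in\pref(F)$. By Proposition~\ref{th:adm} there is $I\in\AS(\enadm(\enf))$ with $I\cong S$. I would construct a candidate answer set $I^*$ of $\enpreftwo(\enf)$ as follows: if $S=A$ (so no $\outA$-atoms appear and $\nontrivial$ is not derivable), set $I^*=I$; otherwise take the saturated extension $I^*=I\cup\{\nontrivial,\spoil\}\cup\{\ecl(a)\mid a\in A\}$. The bulk of the work is verifying that $I^*$ is a minimal model of the reduct $\enpreftwo(\enf)^{I^*}$. For the $\enadm$-predicates this reduces to Proposition~\ref{th:adm}; for the saturation predicates, minimality reduces to Proposition~\ref{prop:2}, since any strictly smaller model would select a proper subset $E\subsetneq A$ of $\ecl$-atoms and drop $\spoil$, and the rules of $\ensatpreftwo$ would then force $E\in\adm(F)$, $E\not\subseteq S$, and $E\cup S\in\cf(F)$, contradicting $S\in\pref(F)$.

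For the converse, let $I\in\AS(\enpreftwo(\enf))$ and set $S=\{a\mid\inA(a)\in I\}$. Since the rules of $\ensatpreftwo$ do not produce new atoms over the predicates used by $\enadm$, the projection of $I$ onto those predicates is an answer set of $\enadm(\enf)$, so by Proposition~\ref{th:adm} $S\in\adm(F)$. Suppose for contradiction $S\notin\pref(F)$. If $\nontrivial\notin I$ then $S=A$, and since $S\in\adm(F)$ there can be no attacks, whence $S=A\in\pref(F)$, a contradiction. Otherwise $\nontrivial\in I$, and line~\ref{line:pr-r7} forces $\spoil\in I$, which via line~\ref{line:pr-r6} forces $\ecl(a)\in I$ for every $a\in A$. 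By Proposition~\ref{prop:2} there exists $E\in\adm(F)$ with $E\not\subseteq S$ and $E\cup S\in\cf(F)$, so I would exhibit the interpretation $I'$ obtained from $I$ by dropping $\spoil$ and replacing the $\ecl$-atoms with exactly $\{\ecl(a)\mid a\in E\}$, and show that $I'$ is a strictly smaller model of $\enpreftwo(\enf)^I$, contradicting minimality.

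The main obstacle is the rule-by-rule verification of $I'$ against the reduct. Line~\ref{line:pr-r03} is the most delicate: because of the conditional literal in its head, its ground instances expand in the reduct to disjunctions encoding the defense of each witness, and one must use $E\in\adm(F)$ to ensure every instance is satisfied without invoking $\spoil$. Lines~\ref{line:pr-r4} and~\ref{line:pr-r5} require the conflict-freeness of $E$ and the absence of $S$-to-$E$ attacks, both of which follow from $E\cup S\in\cf(F)$; and in the forward direction, the symmetric conclusion that the smaller model witnesses $E\cup S\in\cf(F)$ relies on $S\in\adm(F)$ to rule out $E$-to-$S$ attacks, even though the encoding explicitly checks only $S$-to-$E$ conflicts.
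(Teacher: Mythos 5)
Your proposal is correct and takes essentially the same route as the paper's proof: a split of $\enpreftwo$ into $\enadm$ plus the saturation module $\ensatpreftwo$ (the paper justifies your projection step formally via the Lifschitz--Turner splitting theorem), the same trivial/saturated case distinction, and the same minimality arguments driven by Proposition~\ref{prop:2} --- including the subtlety you flag, that ruling out $E$-to-$S$ attacks uses admissibility of $S$ even though the encoding only checks $S$-to-$E$ conflicts. The only cosmetic deviation is in the completeness direction, where you build the smaller model of the reduct from the Proposition~\ref{prop:2} witness $E$ while the paper uses a proper admissible superset $T\supset S$; the two choices are interchangeable.
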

\begin{proof*}
According to~\Cref{def:correspondence}, we have to prove (i) and (ii). With line numbers we refer here to the ASP encoding shown in \Cref{asp:pref-alt}. We employ the \textit{splitting theorem}~\cite{LifschitzT94} in order to get a characterisation of $\AS(\enpreftwo(\enf ))$, in which the sub-programs $\ensatpreftwo$ and $\enadm$ are considered separately. The splitting set is $C_{\pref ^2}$ $=$ $\{\argument(\cdot)$, $\att(\cdot,\cdot)$, $\inA(\cdot)$, $\outA(\cdot)$, $\defeated(\cdot)$, $\undefended(\cdot)\}$, and we obtain
\begin{equation} \label{eq:split}
\AS (\enpreftwo (\enf )) = \bigcup_{J \in \AS (\enadm (\enf ))} \AS (J \cup \ensatpreftwo ). 
\end{equation}
%=================================
% Proof (i)
%=================================
% \begin{proof}[Proof (i)]
\paragraph{Proof (i).} We prove that each preferred extension $S \in \pref(\AF)$ has a corresponding answer-set $I \in \AS ( \enpreftwo (\enf))$. 
From~\Cref{eq:split} we know that $I \in \AS ( \enpreftwo (\enf))$ if $I \in \AS (J \cup \ensatpreftwo )$, for some $J \in \AS ( \enadm (\enf))$. 
Moreover $S \in \pref(\AF)$ implies $S \in \adm(\AF)$, hence by \Cref{th:cf} there is $J \in \AS (\enadm (\enf))$ s.t. $J \cong S$. In the following we distinguish between two complementary cases.

In  case $R = \emptyset$, the set $S=A$ is the only preferred one, since it is trivially admissible and it  cannot be contained in another set of arguments.  
We show $I=J$ is a subset-minimal model of $(J \cup \ensatpreftwo)^J$. 
The subset-minimality is evident. 
Then, $\outA(a) \notin J$ for any $a \in A$ by \Cref{prop:partition}, hence $J$ satisfies the rule at \Cref{line:pr-r01}.
 %even though it does not contain $\nontrivial$. 
Since $\nontrivial \notin J$, $J$ satisfies the rules at Lines \ref{line:pr-r02}, and \ref{line:pr-r7}. 
%$J$ satisfies the rule at \ref{line:pr-r7} even though $\spoil \notin J$. 
Every other rule is satisfied because $\att(a,b) \notin \hat{F}$ for any $a,b \in A$.

In  case $R \neq \emptyset$ 
%Moreover $S \in \pref(\AF)$ implies $S \in \adm(\AF)$, which implies by~\Cref{th:adm} the existence of such a $J \in \AS ( \enadm (\enf))$ corresponding to $S$, formally $J \cong S$.  
%
%we point out that $I \equiv J$, because $J \subseteq I$, and $I$ does not have any additional ground atom $\inA(a)$, since $\inA(\cdot)$ does not appear in the head of any rule of $\ensatpreftwo$. So $I \cong S$ by~\Cref{th:equiv}.
%
%Given that, in order to prove the initial statement, we build a model $I$ of $\enpreftwo (\enf)$, and we prove the subset-minimality of $I$ by contrapositive, i.e. if a model $L \subset I$ of $(\enpreftwo (\enf))^I$ exists, then we are able to find an admissible set $T$ s.t. $S \subset T$, thus making $S$ not preferred.
%
we can build an interpretation $I$ and prove that $I$ is an answer-set by contraposition, i.e. if there is an $L \subset I$
 which satisfies $(J \cup \ensatpreftwo)^I$, then $S \notin \pref(\AF)$.
We define
%Given that, in order to build $I \in \AS ( \enpreftwo (\enf))$, or equivalently $I \in \AS ( J \cup \ensatpreftwo)$, we make $I$ satisfy $(J \cup \ensatpreftwo)^I$ and we prove the subset-minimality of $I$ by contrapositive, i.e. if there is an $L \subset I$ which is a model of $(J \cup \ensatpreftwo)^I$, then $S \notin \pref(\AF)$.
$
I = J \cup \{\spoil, \nontrivial\} \cup \{\ecl(a) \mid a \in A\}
$.
We have $I \cong S$ since $I \equiv J$.
The set $I$ satisfies $(J \cup \ensatpreftwo)^I$ (got from $\Gr(J \cup \ensatpreftwo)$ by just removing the rule at~\Cref{line:pr-r7}), 
% \footnote{\label{foot:lst-pref} In this proof, the referred lines are those in listing~\ref{asp:pref-alt}.})
as $J \subseteq I$ and $I$ contains all the heads of the rules in $(J \cup \ensatpreftwo)^I$. 
%\[
%(J \cup \ensatpreftwo)^I = J \cup \Gr(\ensatpreftwo) \setminus \{\leftarrow \naf \spoil, \nontrivial\}
%\]
Notice that $R\neq \emptyset$ guarantees that the head of the rule at~\Cref{line:pr-r02} is non-empty.

Now we describe the necessary shape of $L$, in order to prove the main assertion next. 
$L$ must contain $\nontrivial$ because of the rule at~\Cref{line:pr-r01}. Indeed $\outA(c) \in L$ for some $c \in A \setminus S$, since $J \subseteq L$ 
with $J\cong S$ and $S \in \cf(\AF)$ (since $S \in \adm(\AF)$), which implies the existence of $c \in A \setminus S$ (we cannot have simultaneously $R \neq \emptyset$, $S \in \cf(\AF)$ and $S=A$), which implies $\outA(c) \in J$ by~\Cref{prop:partition}. 
We have $\spoil \notin L$, otherwise also $\{\ecl(a) \mid a \in A\}$ would be in $L$ (because of the rule at~\Cref{line:pr-r6}), making $L$ equal to $I$, but they are different by assumption. 
% 

%It is evident that $J \subseteq L$. 

Now we show that, given $L$, it is possible to find a set $U \in \adm(\AF)$ s.t. $U \not \subseteq S$ and $U \cup S \in \cf(\AF) $, which implies $S \notin \pref(\AF)$ by~\Cref{prop:2}. We define $U = \{a \mid \ecl(a) \in L \}$, and we show all the required properties:

%\begin{itemize}
% The set $U$ is admissible, since it is conflict-free and it defends itself. 
%
%\begin{itemize}
%\item The set $U$ is conflict-free, 
\noindent
\underline{$U\in\cf(F)$}, otherwise we would have two arguments $a,b$ attacking each other, meaning $\{\ecl(a),$ $\ecl(b), \att(a,b)\} \subseteq L$, which implies $B(r) \subseteq L$  and $H(r) \not \subseteq L$ for some rule $r$ in the grounding of the rule at~\Cref{line:pr-r03}, since $\spoil \notin L$. 
%
%\item 

\noindent
\underline{Each $a\in U$ is defended by $U$}, otherwise it would be possible to find two atoms $\ecl(a) \in L$ [$a \in U$]\footnote{\label{foot:squarebrackets} In this proof, the square brackets are used to point out an immediate implication of the statement preceding them. Usually the statement is about the framework $\AF$ and the implication about an interpretation, or the other way around.} 
and $\att(b,a) \in L$ [$( b,a) \in R$] for which there is no $\ecl(c) \in L$ [$c \in U$] s.t. $\att(c,b) \in L$ [$( c,b) \in R$], thus violating 
%one of the rules in the grounding of 
the rule at~\Cref{line:pr-r4}, since $\spoil \notin L$.
%\end{itemize}
%

\noindent
%\item 
\underline{$U\not\subseteq S$}. Indeed if we assume $U \subseteq S$, then for every $\ecl(a) \in L$ we have $a \in S$ (by definition of $U$), which corresponds to $\inA(a) \in J$ ($S \cong J$), implying $\outA(a) \notin J$ (by~\Cref{prop:partition}), making it impossible for $L$ to satisfy the rule at~\Cref{line:pr-r02}, since $\nontrivial \in L$.

\noindent
%\item 
\underline{$\{U \cup S\}\in \cf(F)$}. The sets $U$ and $S$ are conflict-free, so we have to show that there cannot be attack relations between the two sets:
an argument $a \in S$ cannot attack an argument $b \in U$, otherwise we would have $\{ \ecl(b)$, $\inA(a)$, $\att(a,b) \}$ $\subseteq$ $L$, which implies $B(r) \subseteq L$  and $H(r) \not \subseteq L$ for some rule in the grounding of the rule at~\Cref{line:pr-r5}, since $\spoil \notin L$;
an argument $b \in U$ cannot attack an argument $a \in S$, otherwise an argument $c \in S$ should attack $b$ by admissibility of $S$, thus violating the previous point.
%\end{itemize}
% \end{proof}
%=================================
% Proof (ii)
%=================================
% \begin{proof}[Proof (ii)]
\vspace{-0.3cm}
\paragraph{Proof (ii).}
We prove that each $I \in \AS (\enpreftwo(\enf))$ corresponds to an $S \in \pref(\AF)$. 
From~\Cref{eq:split} we see that $I \in \AS (\enpreftwo (\enf))$ only if $I \in \AS (J \cup \ensatpreftwo)$ for some $J \in \AS (\enadm (\enf ))$. We have $I \equiv J$, because $J \subseteq I$, and $I$ does not have any additional ground atom $\inA(a)$, since $\inA(\cdot)$ does not appear in the head of any rule of $\ensatpreftwo$.
By~\Cref{th:adm} there exists $S \in \adm(\AF)$ s.t. $S \cong J$, hence $S \cong I$ by~\Cref{th:equiv}. 
%Hence we have proved that each answer $I$ corresponds to an admissible extension $S$.
We show that $S$ is also preferred in $\AF$, by distinguishing between two complementary cases.

\noindent
\underline{$\nontrivial \notin I$}: we have $\outA(a) \notin I$ for any $a \in A$, otherwise the rule at \Cref{line:pr-r01} would be violated. By Proposition\ref{prop:partition} this implies $\inA(a) \in I$ for every $a \in A$, and the same is true for $J$ ($J \cong I$), which we know to be admissible. 
Hence, $S=A$ and $S\in \pref(\AF)$.
%admissible, then preferred, because it cannot be contained in another set of arguments.

%In the case 
\noindent
\underline{$\nontrivial \in I$}: we prove that $S$ is preferred by contraposition, i.e. if $S \notin \pref(\AF)$ then $I$ is not a subset-minimal model of $(J \cup \ensatpreftwo)^I$.
We have that $I$ must have a clear shape in order to satisfy $(J \cup \ensatpreftwo)^I$. In particular $J \subseteq I$.
%and $\nontrivial \in I$ as well, because of the rule at~\Cref{line:pr-r01}
%%% \footnotemark[\ref{foot:lst-pref}], 
%since $J$ corresponding to $S$ admissible, together with $R \neq \emptyset$, implies the existence of an argument $c \in A \setminus S$, which implies $\outA(c)$ by~\Cref{prop:partition}. 
Then $\spoil \in I$ because of the rule at~\Cref{line:pr-r7} %(notice $\nontrivial \in I$), 
hence, $\ecl(a) \in I$ for each $\argument(a) \in I$ because of the rule at~\Cref{line:pr-r6}. 
%Every other rule $r \in \Gr(\ensatpreftwo)$ is satisfied, because the model contains at least one of the atoms in $\head{r}$. 
Summing up we have $J \cup \{\nontrivial, \spoil\} \cup \{\ecl(a) \mid a \in A\} \subseteq I$.
Finally we show that $I \notin \AS (J \cup \ensatpreftwo)$, since we are able to build an interpretation $L \subset I$ satisfying the reduct 
$(J \cup \ensatpreftwo )^I$.
%$(J \cup \ensatpreftwo )^I = J \cup \ensatpreftwo \setminus \{\la \naf \spoil, \nontrivial\}$.
We remind that $S \notin \pref(\AF)$ means that there exists $T \in \pref(\AF)$ s.t. $S \subset T$.
We use $T$ to build the interpretation $L = J \cup \{\nontrivial\} \cup \{ \ecl(a) \mid a \in T \}$. 
We have $L \subset I$, because it does not contain $\spoil$ and $T \subseteq A$.
In the following we show that $L$ is a model of the reduct, because it contains $J$ and it satisfies each rule in $\Gr(\ensatpreftwo)$.
%\begin{itemize}
%	\item $L$ satisfies $J$ because $J \subseteq L$.
%	\item 
	
	$L$ satisfies the rule at~\Cref{line:pr-r02}, because there exists $\ecl(a) \in L$ s.t. $\outA(a) \in L$, for some $a \in T \setminus S $ (the element $a$ exists because $T$ is a proper superset of $S$). 
	\footnote{\label{foot:3} If $a \in T \setminus S $, then $a \notin S$, then $\inA(a) \notin I$ ($S \cong I$), then $\inA(a) \notin J$ ($I \equiv J$), then $\outA(a) \in J$ (by~\Cref{prop:partition}), then $\outA(a) \in L$ ($J \subset L$). Summing up, if $a \in T \setminus S$, then $\outA(a) \in L$, and $\ecl(a) \in L$ by definition.}
%	\item
	
	 Since $T$ is admissible, for each $a \in T$ [$\ecl(a) \in L$] %\cref{foot:squarebrackets} 
	 attacked by  $b \in A$ [$\att(b,a) \in \enf $] there exists $c \in T$ [$\ecl(c) \in L$] attacking $b$ [$\att(c,b) \in \enf $]. Hence $L$ satisfies  the rule at~\Cref{line:pr-r03}, even though $\spoil \notin L$.
%	\item 
	
	$L$ does not contain the body of any rule in the grounding of the rule at~\Cref{line:pr-r4}, otherwise $T$ would not be conflict free.
%% removed for saving space
%	\item %% removed for saving space
	$L$ does not contain the body of any rule in the grounding of the rule at~\Cref{line:pr-r5}, otherwise $T$ would not be conflict free, since $S \subset T$.
%	\item %% removed for saving space
	$L$ does not contain the body of any rule in the grounding of the rule at~\Cref{line:pr-r6}, because it does not contain $\spoil$. \phantom{.}\hfill\ensuremath{\Box}
%\end{itemize}
%This shows that $L \subset I$ is a model of $\ensatpreftwo ^I$, therefore $I$ cannot be an answer.
\end{proof*}
%========================================
% Second theorem about preferred semantics
%========================================

%\begin{restatable}{lma}{lmapreftwo} \label{lemma:pref-2}
%For any AF $\AF =(A,\emptyset)$, we have $\pref(\AF) \cong \AS (\enpreftwo (\enf))$.
%\end{restatable}

% \begin{lma} 
% \label{lemma:pref-2}
% For any AF $\AF =(A,\emptyset)$, we have $\pref(\AF) \cong \AS (\enpreftwo (\enf))$.
% \end{lma}

%

\subsection{Encodings for Semi-Stable and Stage Semantics}

%{\bfseries ToDo}: add here encodings and proof sketch for one and refer to appendix if needed. Also, move other subsections here

\paragraph{Semi-stable semantics}
The encoding for semi-stable semantics is given by $\ensemitwo = \enadm \cup \enrange \cup \ensatsemitwo$, with  $\ensatsemitwo$ shown in \Cref{asp:semi-alt}.
We first give the intuition.
A candidate $S$ for being semi-stable is computed by the program $\enadmp = \enadm \cup \enrange$ via the $\inA(\cdot)$ predicate and is known admissible. The module $\enrange$ computes the range and derives $\ok$ iff the extension is not stable. If $S$ is stable, we are done. Otherwise the remainder of the program $\ensatsemitwo$ is used to check whether an admissible cover $E$ of a superset of the range $S_R^{+}$ exists. 
Starting from $S_R^{+}$ (\Cref{line:sm-r2}), a superset is achieved by adding at least one element out of it (\Cref{line:sm-r1}). Then a cover is found (\Cref{line:sm-r3}), which is admissible (\Cref{line:sm-r3,line:sm-r4}). If we are able to reach this point without deriving $\spoil$ (that is always a possibility for satisfying the constraints), then the candidate $S$ cannot be an answer-set (\Cref{line:sm-r3}). This is in line with \Cref{prop:1}, which states that in this case $S$ is not semi-stable. 
Here we state the correctness of the encoding, a full proof is given 
in the online appendix (Appendix A). 
%TODO: tech-report
% in~\cite{DBAI-TR-2015-93}. 
% in~\ref{appendix:proofs}.

% Encoding for semi-stable semantics (only saturation module)
\begin{lstlisting}[float=tb,caption=Module $\ensatsemitwo$,frame=lines,label=asp:semi-alt,escapechar=§,belowskip=-0.8 \baselineskip]
larger_range(X):out_of_range(X) :- unstable.§\label{line:sm-r1}§
larger_range(X) :- range(X), unstable.§\label{line:sm-r2}§
witness(X) | witness(Z):att(Z,X) :- larger_range(X), unstable.§\label{line:sm-r3}§
spoil :- witness(X), witness(Y), att(X,Y), unstable.§\label{line:sm-r4}§
spoil | witness(Z):att(Z,Y) :- witness(X), att(Y,X), unstable.§\label{line:sm-r5}§
witness(X) :- spoil, arg(X), unstable.§\label{line:sm-r6}§
larger_range(X) :- spoil, arg(X), unstable.§\label{line:sm-r7}§
:- not spoil, unstable.§\label{line:sm-r8}§
\end{lstlisting}

\begin{restatable}{prop}{propSemiCorrectness}
\label{prop:semi}
 For any AF $\AF =(A,R)$, we have $\semi(F) \cong \AS (\ensemitwo (\enf))$.
\end{restatable}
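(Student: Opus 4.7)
My plan is to follow the same saturation-based template used in the proof of \Cref{prop:pref}, augmented by \Cref{prop:5} to absorb the range module. The first step is to apply the splitting theorem with splitting set
$\{\argument(\cdot), \att(\cdot,\cdot), \inA(\cdot), \outA(\cdot), \defeated(\cdot), \undefended(\cdot), \rge(\cdot), \nrge(\cdot), \ok\}$
so that
\[
\AS(\ensemitwo(\enf)) = \bigcup_{J^+ \in \AS(\enadmp(\enf))} \AS(J^+ \cup \ensatsemitwo).
\]
Combining \Cref{th:adm} with \Cref{prop:5}, each outer $J^+$ corresponds to a unique $S \in \adm(\AF)$ enriched with $\rge(a)$ for $a \in S^+_R$, $\nrge(a)$ for $a \in A \setminus S^+_R$, and $\ok \in J^+$ iff $S^+_R \subsetneq A$. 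The inner saturation will then be matched against \Cref{prop:1}(3): $S$ is semi-stable iff no $S'$ with $S^+_R \subsetneq S' \subseteq A$ admits a cover $E \in \Gamma_F(S') \cap \adm(\AF)$.

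For direction (i), given $S \in \semi(\AF)$ with associated $J^+$, the stable case ($\ok \notin J^+$) is immediate: the bodies of all rules in $\ensatsemitwo$ are unsatisfied and $I := J^+$ is already an answer set. Otherwise I set
\[
I := J^+ \cup \{\spoil\} \cup \{\ecl(a) \mid a \in A\} \cup \{\lrge(a) \mid a \in A\};
\]
that $I$ satisfies the reduct is direct, and subset-minimality is proved by contraposition along the lines of \Cref{prop:pref}. Any strict submodel $L$ of $(J^+\cup\ensatsemitwo)^I$ must omit $\spoil$ (else the saturation rules at \Cref{line:sm-r6,line:sm-r7} force $L=I$), so the sets $S' := \{a \mid \lrge(a) \in L\}$ and $U := \{a \mid \ecl(a) \in L\}$ witness, via \Cref{line:sm-r1,line:sm-r2,line:sm-r3,line:sm-r4,line:sm-r5}, an admissible $U \in \Gamma_F(S')$ with $S^+_R \subsetneq S' \subseteq A$, contradicting \Cref{prop:1}.

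For direction (ii), I would descend through the splitting to recover $S \in \adm(\AF)$ from $I$. If $\ok \notin I$ then $S$ is stable and hence semi-stable. If $\ok \in I$, I argue by contraposition: assuming $S \notin \semi(\AF)$, \Cref{prop:1} supplies some $a^* \in A \setminus S^+_R$ and some $E \in \adm(\AF) \cap \Gamma_F(S^+_R \cup \{a^*\})$, from which
\[
L := J^+ \cup \{\lrge(a) \mid a \in S^+_R \cup \{a^*\}\} \cup \{\ecl(a) \mid a \in E\}
\]
is built and shown to be a proper submodel of $I$ satisfying the reduct. The main obstacle is checking the conditional disjunction at \Cref{line:sm-r3} for this $L$: one must verify that for each $a \in S^+_R \cup \{a^*\}$ either $a \in E$ or some attacker of $a$ lies in $E$, which is exactly the cover condition $S^+_R \cup \{a^*\} \subseteq E^+_R$. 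The admissibility of $E$ then discharges \Cref{line:sm-r5} without introducing $\spoil$, and the remaining conflict-freeness and saturation clauses follow verbatim the pattern of \Cref{prop:pref}.
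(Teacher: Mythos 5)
Your proposal is correct and takes essentially the same route as the paper's proof (given in its online appendix): splitting off the bottom program $\enadmp$, characterizing its answer sets via \Cref{th:adm} together with \Cref{prop:5}, and then running the saturation/contraposition argument of \Cref{prop:pref} matched against \Cref{prop:1}, with the cover condition at \Cref{line:sm-r3} correctly identified as the crux. No gaps worth noting.
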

%\begin{proof}[Proof sketch]
%\todo{}
%\end{proof}
%(TODO) The encoding for semi-stable semantics is almost identical to the one for stage semantics, and the same is true for the correctness proofs. The only differences in the encoding for semi-stable semantics are that it contains $\enadm$ instead of $\encf$, and the additional rule at~\Cref{line:sm-r5} of~\Cref{asp:semi-alt}. The former guarantees that every answer set corresponds to an admissible extension. The latter that a set $T$, corresponding to the interpretation of the predicate $\ecl(\cdot)$, s.t. $T^{+} \supset S^{+}$, in order to exclude an extension $S$, corresponding to the answer set, as semi-stable, must be admissible. Indeed this rule has the same shape of rule at~\Cref{line:pr-r03} of~\Cref{asp:pref-alt}, whose effect has been showed in the correctness proof of the encoding for semi-stable semantics.

\paragraph{Stage semantics}

%% Encoding for stage semantics (only saturation module)
%\begin{lstlisting}[float=tb,caption=Module $\ensatstagetwo$,frame=lines,label=asp:stage-alt,escapechar=§,belowskip=-0.8 \baselineskip]
%larger_range(X):out_of_range(X) :- unstable.§\label{line:st-r1}§
%larger_range(X) :- range(X), unstable.§\label{line:st-r2}§
%witness(X) | witness(Z):att(Z,X) :- larger_range(X), unstable.§\label{line:st-r3}§
%spoil :- witness(X), witness(Y), att(X,Y), unstable.§\label{line:st-r4}§
%witness(X) :- spoil, arg(X), unstable.§\label{line:st-r5}§
%larger_range(X) :- spoil, arg(X), unstable.§\label{line:st-r6}§
%:- not spoil, unstable.§\label{line:st-r7}§
%\end{lstlisting}

The encoding for stage semantics is given by $\enstagetwo = \encf \cup \enrange \cup \ensatsemitwo \setminus \{r_{admcov}\}$, where $r_{admcov}$ is the rule at \Cref{line:sm-r5} of \Cref{asp:semi-alt}. 
The only differences w.r.t.\ the encoding for semi-stable semantics are: (i) it employs $\encf$ instead of $\enadm$, thus the candidate sets are only conflict-free; and (ii) it lacks the rule at \Cref{line:sm-r5}, hence it considers all the conflict-free covers of the candidate set, which is still in line with \Cref{prop:1}. 
%Here we state the correctness of the encoding. 
A proof sketch for the forthcoming correctness result is given 
in the online appendix (Appendix A). 
%TODO: tech-report
% in~\cite{DBAI-TR-2015-93}.
% in~\ref{appendix:proofs}.

%We define $\encfp =  \encf \cup \enrange$.
%Using the \textit{splitting theorem} we can get a characterisation of $\AS (\enstagetwo (\enf))$, in which the sub-programs $\ensatstagetwo$ and $\encfp$ are considered separately. The splitting set is $C = \{\argument$, $\att$, $\inA$, $\outA$, $\rge$, $\nrge$, $\ok\}$.
%\begin{equation} \label{eq:stage-split}
%	\AS (\enstagetwo (\enf)) = \bigcup_{J \in \AS (\encfp (\enf))} \AS (J \cup \ensatstagetwo)
%\end{equation}

\begin{restatable}{prop}{propStageCorrectness}
For any AF $\AF =(A,R)$, we have $\stage(\AF) \cong \AS (\enstagetwo (\enf))$.
\end{restatable}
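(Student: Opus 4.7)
The plan is to mirror the structure of the proof of \Cref{prop:semi} (for semi-stable semantics), carefully tracking what changes when we drop the rule $r_{admcov}$ at \Cref{line:sm-r5} and replace $\enadm$ by $\encf$. As a first step I would apply the splitting theorem with splitting set $C_{\stage^2} = \{\argument(\cdot), \att(\cdot,\cdot), \inA(\cdot), \outA(\cdot), \rge(\cdot), \nrge(\cdot), \ok\}$, so that
\begin{equation*}
\AS(\enstagetwo(\enf)) = \bigcup_{J \in \AS((\encf \cup \enrange)(\enf))} \AS(J \cup (\ensatsemitwo \setminus \{r_{admcov}\})).
\end{equation*}
By \Cref{th:cf} and \Cref{prop:5} the outer union ranges exactly over the conflict-free sets of $\AF$, together with their range information and the $\ok$ flag indicating non-stability. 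This matches the hypothesis ``$S\in\cf(F)$'' in \Cref{prop:1}, which is precisely the characterization we will exploit.

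For direction (i), let $S \in \stage(\AF)$ and let $J$ be its counterpart from $\encf \cup \enrange$, extended as in \Cref{eq:range-condition}. If $S$ is already stable, then $\ok \notin J$, the entire body of each rule in $\ensatsemitwo \setminus \{r_{admcov}\}$ is unsatisfied, and $I = J$ is trivially the unique answer set of $J \cup (\ensatsemitwo \setminus \{r_{admcov}\})$. Otherwise $\ok \in J$; I build the saturated interpretation $I = J \cup \{\spoil\} \cup \{\ecl(a) \mid a \in A\} \cup \{\lrge(a) \mid a \in A\}$ and argue via contraposition that any proper subset $L \subset I$ satisfying the reduct would yield a conflict-free set $U = \{a \mid \ecl(a) \in L\}$ that covers some proper superset of $S^+_R$, contradicting $S \in \stage(\AF)$ by \Cref{prop:1}. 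The key point here is that with $r_{admcov}$ removed we only need $U \in \cf(\AF)$, which is exactly what the remaining rule at \Cref{line:sm-r4} enforces (through $\spoil \notin L$).

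For direction (ii), let $I \in \AS(\enstagetwo(\enf))$ and let $S$ be the corresponding conflict-free set given by $S \cong I$ via \Cref{th:equiv,th:cf}. If $\ok \notin I$ then $S$ is stable, hence $S \in \stage(\AF)$ by the observation recalled after \Cref{example:AF}. If $\ok \in I$, I proceed by contraposition: suppose $S \notin \stage(\AF)$. Then by \Cref{prop:1} there exist $a \in A \setminus S^+_R$ and $E \in \cf(\AF)$ with $S^+_R \cup \{a\} \subseteq E^+_R$. Using $E$ and its range, I construct
\begin{equation*}
L = J \cup \{\lrge(b) \mid b \in S^+_R \cup \{a\}\} \cup \{\ecl(b) \mid b \in E \cup \{c \in A \mid \exists d\in E:(d,c)\in R,\, c \in S^+_R \cup\{a\}\setminus E\}\}
\end{equation*}
(roughly: take the cover witnesses from $E$) and show $L \subsetneq I$ satisfies $(J \cup (\ensatsemitwo \setminus \{r_{admcov}\}))^I$. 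Conflict-freeness of $E$ ensures the rule at \Cref{line:sm-r4} is satisfied without $\spoil$, and the absence of $r_{admcov}$ is exactly what lets us get away with $E$ being merely conflict-free rather than admissible.

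The main obstacle will be the bookkeeping in direction (ii): giving a precise definition of the witness set used in the smaller model $L$ so that \emph{both} the head of the cover-guessing rule at \Cref{line:sm-r3} and the conflict-freeness constraint at \Cref{line:sm-r4} are satisfied simultaneously, while keeping $L$ strictly below $I$ (in particular keeping $\spoil \notin L$). Once this construction is pinned down, the remaining verifications are routine line-by-line checks against \Cref{asp:rng,asp:cf,asp:semi-alt} analogous to those in the proof of \Cref{prop:pref}.
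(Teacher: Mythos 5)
Your overall strategy is exactly the route the paper takes: split along $C_{\stage^2}$ so that the bottom program $(\encf \cup \enrange)(\enf)$ is characterized by \Cref{th:cf} and \Cref{prop:5} as the conflict-free sets together with their range atoms and the $\ok$ flag, and then run the same saturation argument as in the proof of \Cref{prop:pref}, with condition (2) of \Cref{prop:1} playing the role that \Cref{prop:2} played for preferred semantics (the paper's appendix proof for stage is precisely the semi-stable argument with $\enadm$ replaced by $\encf$ and the defense-propagation rule $r_{admcov}$ dropped). Your direction (i) is sound as sketched, including the stable case, where every rule of $\ensatsemitwo \setminus \{r_{admcov}\}$ is blocked because $\ok \notin J$.

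However, the displayed construction of $L$ in direction (ii) fails as written. You add $\ecl(c)$ for arguments $c \in (S^{+}_R \cup \{a\}) \setminus E$ that are attacked by some $d \in E$; but then $\{\ecl(d), \ecl(c), \att(d,c), \ok\} \subseteq L$ satisfies the body of a ground instance of the rule at \Cref{line:sm-r4} while $\spoil \notin L$, so $L$ is not a model of the reduct --- this breaks exactly the constraint you set out to respect. The extra atoms are also unnecessary: for each $b$ with $\lrge(b) \in L$, the head of the rule at \Cref{line:sm-r3} is already satisfied either by $\ecl(b)$ (if $b \in E$) or by the conditional disjunct $\ecl(z)$ for some attacker $z \in E$ of $b$, and the covering property $S^{+}_R \cup \{a\} \subseteq E^{+}_R$ guarantees that one of these two cases applies. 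The correct choice is simply $L = J \cup \{\lrge(b) \mid b \in S^{+}_R \cup \{a\}\} \cup \{\ecl(e) \mid e \in E\}$: conflict-freeness of $E$ keeps \Cref{line:sm-r4} satisfied without $\spoil$; $\nrge(a) \in J$ makes $\lrge(a)$ satisfy the conditional head at \Cref{line:sm-r1}; \Cref{line:sm-r2} is covered since $\{\lrge(b) \mid b \in S^{+}_R\} \subseteq L$; the rules at \Cref{line:sm-r6,line:sm-r7} are blocked by $\spoil \notin L$; the constraint at \Cref{line:sm-r8} disappears from the reduct because $\spoil \in I$ (forced by \Cref{line:sm-r8} itself, which via \Cref{line:sm-r6,line:sm-r7} saturates $I$, ensuring both $L \subseteq I$ and $L \neq I$). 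So you identified the right ``main obstacle,'' but your tentative witness set points in the wrong direction --- take the witnesses to be $E$ alone; with that repair the remaining line-by-line checks go through as you describe.
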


\section{Evaluation}

We tested the novel encodings (NEW) extensively and compared them to the original (ORIGINAL) and metasp (META) encodings as well as to the system ConArg~\cite{BistarelliS11a}. For the novel and original encodings we used
\emph{Clingo 4.4} and for the metasp encodings we used \emph{gringo3.0.5/clasp3.1.1} all from the Potassco 
group\footnote{\url{http://potassco.sourceforge.net}}.
As benchmarks, we considered a collection of frameworks which have been used by different colleagues for testing before consisting of structured and
randomly generated AFs, resulting in 4972 
frameworks. In particular we used parts of the instances Federico Cerutti provided to us which have been generated towards
an increasing number of SCCs~\cite{VallatiCG14}. 
Further benchmarks were used to test the system 
\emph{dynpartix} and we included the instances provided by the ICCMA 2015 organizers.
The full set is available at \url{http://dbai.tuwien.ac.at/research/project/argumentation/systempage/#conditional}.

For each framework the task is to enumerate all solutions. 
The computation has been performed on an Intel Xeon E5-2670 running at 2.6\,GHz. 
From the 16 available cores we used only every fourth core to allow a better utilization of the CPU's cache. 
We applied a 10 minutes timeout, allowing to use at most 6.5\,GB of main memory. 

It turns out that for each semantics the new encodings significantly outperform the original ones as well as the system ConArg. 
Furthermore, there is a clear 
improvement to the metasp encodings, as illustrated in 
%The novel encodings have ben added to the ASPARTIX solver~\cite{?aspartix}, and we used ConArg~\cite{BistarelliS11a} as an external tool. 
%We refer to the new encoding with \textsc{NEW}, to the metasp encoding with \textsc{META} and denote the original encoding with \textsc{ORIGINAL}. 
Fig.~\ref{fig:runtimes} which shows the cactus plots of the required runtime to solve frameworks (x-axis) with the respective timeout (y-axis) for the three
 discussed semantics. 
While for preferred and semi-stable semantics the novel encodings are able to solve more than 4700 instances (out of 4972), one can 
observe a different trend for stage semantics. There, the new encodings return the best result with 2501 solved instances.
\begin{figure}[t]
\centering
\includegraphics[width=6.2cm]{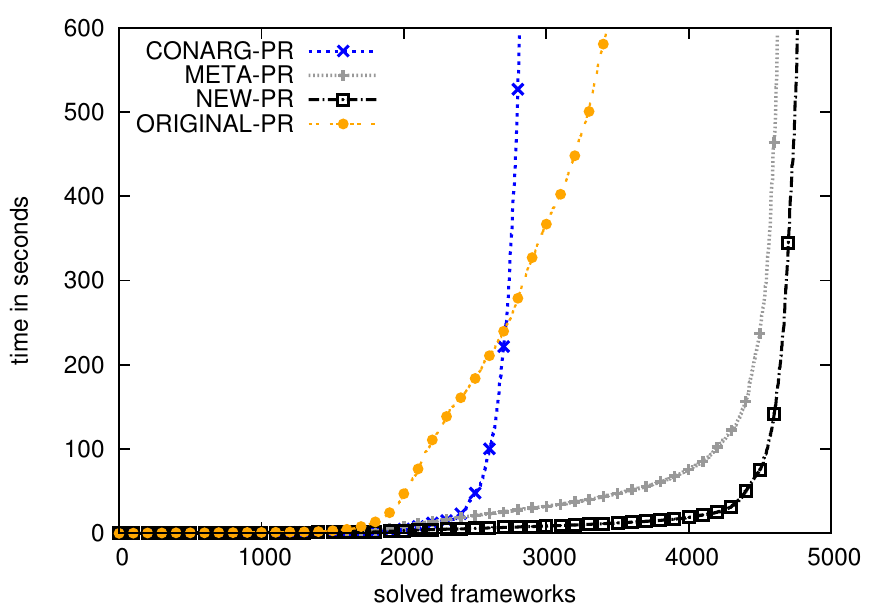} \includegraphics[width=6.2cm]{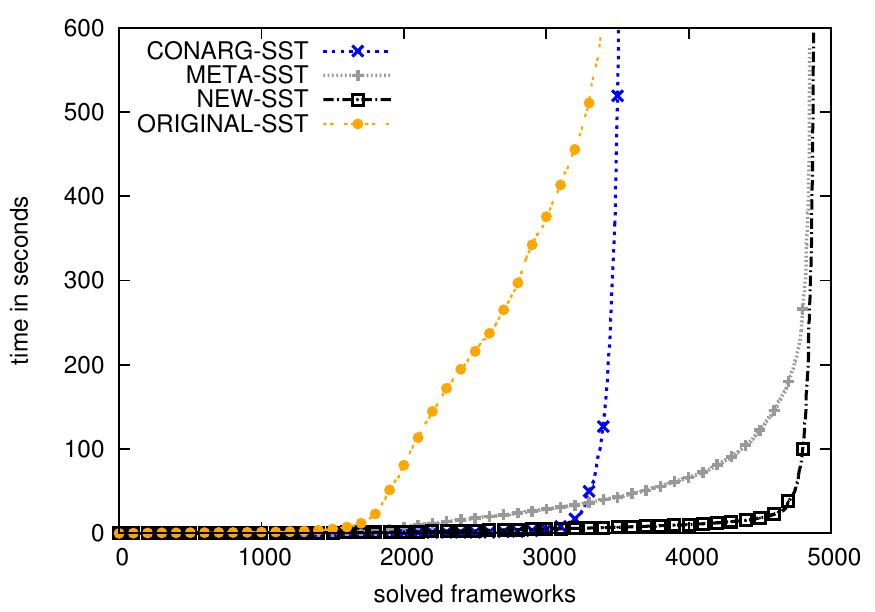}
\vspace{0.5cm}

\includegraphics[width=6.2cm]{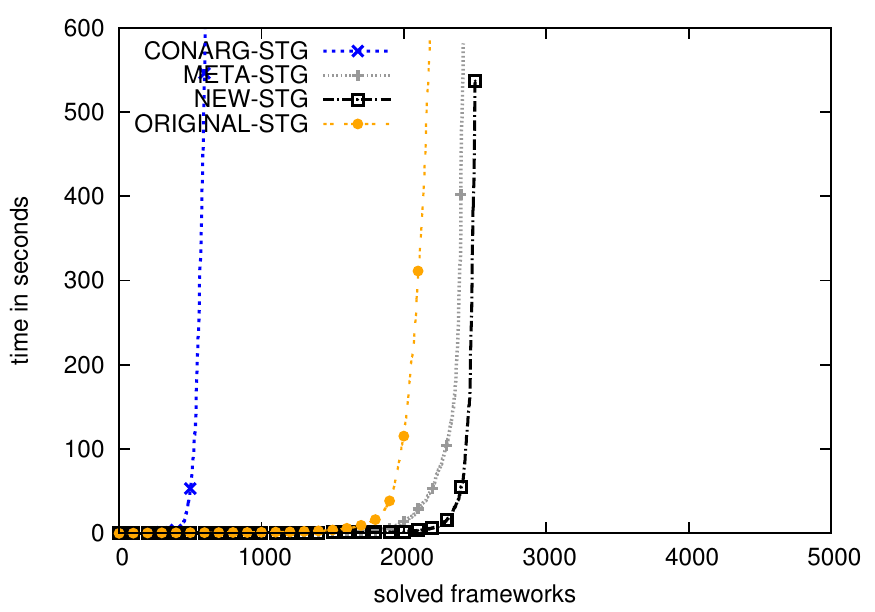}
\caption{Runtimes for preferred (PR), semi-stable (SST) and stage (STG) semantics.}
\label{fig:runtimes}
\end{figure}
%The plots nicely show that the novel encodings consistently improve over the original ones for all semantics. 
%Furthermore, the metasp encoding is not as strong as the new ASP encodings. 
%For both the preferred and semi-stable semantics, ConArg is more robust on structured frameworks. 
%With the stage semantics, ASP encodings outperform ConArg significantly, as well as on randomly created frameworks. 
Table~\ref{tab:summary} gives a summary of the test results, where \emph{usc} denotes the unique solver contribution, i.e.\ 
the number of AFs which could only be solved by the particular solver, \emph{solved} gives the number of solved instances
by the solver, and \emph{med} is the median of the computation time of the solver.
\begin{table}[t]
\begin{center}
\begin{tabular}{p{1.1cm}|p{0.2cm}p{0.4cm}r||p{1.1cm}|p{0.2cm}p{0.4cm}r||p{1.1cm}|p{0.2cm}p{0.4cm}r}
PR & usc & solved &  med & SST & usc & solved &  med & STG & usc & solved &  med\\
\hline
ConArg & 60 & 2814 &  43.65 & ConArg & 50 & 3509 & 1.03 & ConArg & - & 606 & 600.00\\
Original & - & 3425 &  180.36 & Original & - & 3386 & 211.96 & Original & - & 2185 & 600.00\\
Meta & 1 & 4626 &  20.83 & Meta & 2 & 4830 & 17.70 & Meta & 5 & 2419 & 600.00 \\
New & 101 & 4765 &  5.77 & New & 13 & 4879 & 3.30 & New & 82 & 2501 & 384.92\\
\end{tabular}
\caption{Summary of test results.\label{tab:summary}}
\end{center}
\end{table}
Interestingly, ConArg is able to solve 60 (resp.\ 50) instances for preferred (resp.\ semi-stable) semantics which are not
solvable by the other systems. However, the novel encodings are able to uniquely solve 101 (resp.\ 82) instances for preferred (resp.\ stage)
semantics. The original encodings have no \emph{unique solver contribution} for all of the considered semantics, thus it is save to replace them with
the new encodings. The entries for the median also show that all the novel encodings perform much faster than the other
systems, except for semi-stable where ConArg has the lowest median. 
However, here ConArg is able to solve about 1300 instances
less than the novel encodings.

Another interesting observation is that the grounding size of all new encodings is significantly smaller than of both the original and 
the metasp encodings. 
%Although one can not exactly predict how the grounding size influences the solving process, we can also not exclude
%that there is a possible correlation.

\section{Conclusion}

% \margintodo{to fix and/or complete}

In this work, we have developed novel ASP encodings for computationally challenging problems arising in abstract argumentation. 
% of the ASP-based approach to argumentation. 
% 
% 
% In addition to being more succinct, 
Our new encodings for preferred, semi-stable, and stage semantics avoid complicated loop constructs present in previous encodings. In addition to being more succinct, 
our empirical evaluation showed that a significant performance boost was achieved compared to the earlier ASP encodings, and that our encodings outperform the 
% 
% 
% performance boost a significant was achieved compared to the earlier encodings
% Our new encodings for the high-complexity preferred, semi-stable and stage semantics have significantly 
% improved the performance and outperformed the 
state-of-the-art system
ConArg.  From an ASP perspective, our results indicate that 
loops 
in saturation encodings 
(as used in the previous encodings in \cite{EglyGW2010}) 
are a severe performance bottleneck which should be avoided.

In future work, we plan to compare our results also with 
the systems 
CEGARTIX
\cite{DvorakJWW2014}
and  ArgSemSAT
\cite{CeruttiGV14a}.
Furthermore, we also aim for finding better ASP encodings
for the ideal~\cite{DungMT07} and eager semantics~\cite{Caminada2007}.


\begin{thebibliography}{}

\bibitem[\protect\citeauthoryear{Baroni, Caminada, and Giacomin}{Baroni
  et~al\mbox{.}}{2011}]{Baroni2011}
{\sc Baroni, P.}, {\sc Caminada, M. W.~A.}, {\sc and} {\sc Giacomin, M.} 2011.
\newblock {An Introduction to Argumentation Semantics}.
\newblock {\em The Knowledge Engineering Review\/}~{\em 26,\/}~4, 365--410.

\bibitem[\protect\citeauthoryear{Besnard and Hunter}{Besnard and
  Hunter}{2008}]{BesnardH2008}
{\sc Besnard, P.} {\sc and} {\sc Hunter, A.} 2008.
\newblock {\em {Elements of Argumentation}}.
\newblock MIT Press.

\bibitem[\protect\citeauthoryear{Bistarelli and Santini}{Bistarelli and
  Santini}{2011}]{BistarelliS11a}
{\sc Bistarelli, S.} {\sc and} {\sc Santini, F.} 2011.
\newblock {ConArg:} {A} constraint-based computational framework for
  argumentation systems.
\newblock In {\em Proceedings of the 23rd IEEE International Conference on
  Tools with Artificial Intelligence (ICTAI 2011)}, {T.~M. Khoshgoftaar} {and}
  {X.~H. Zhu}, Eds. IEEE Computer Society Press, 605--612.

\bibitem[\protect\citeauthoryear{Brewka, Eiter, and Truszczy{\'n}ski}{Brewka
  et~al\mbox{.}}{2011}]{BrewkaET11}
{\sc Brewka, G.}, {\sc Eiter, T.}, {\sc and} {\sc Truszczy{\'n}ski, M.} 2011.
\newblock Answer set programming at a glance.
\newblock {\em Communications of the ACM\/}~{\em 54,\/}~12, 92--103.

\bibitem[\protect\citeauthoryear{Caminada}{Caminada}{2007}]{Caminada2007}
{\sc Caminada, M. W.~A.} 2007.
\newblock {Comparing Two Unique Extension Semantics for Formal Argumentation:
  Ideal and Eager}.
\newblock In {\em Proceedings of the 19th Belgian-Dutch Conference on
  Artificial Intelligence, BNAIC 2007}. 81--87.

\bibitem[\protect\citeauthoryear{Caminada and Amgoud}{Caminada and
  Amgoud}{2007}]{CaminadaA2007}
{\sc Caminada, M. W.~A.} {\sc and} {\sc Amgoud, L.} 2007.
\newblock {On the Evaluation of Argumentation Formalisms}.
\newblock {\em Artificial Intelligence\/}~{\em 171,\/}~5-6, 286--310.

\bibitem[\protect\citeauthoryear{Caminada, Carnielli, and Dunne}{Caminada
  et~al\mbox{.}}{2012}]{CaminadaCD12}
{\sc Caminada, M. W.~A.}, {\sc Carnielli, W.~A.}, {\sc and} {\sc Dunne, P.~E.}
  2012.
\newblock {Semi-Stable Semantics}.
\newblock {\em Journal of Logic and Computation\/}~{\em 22,\/}~5, 1207--1254.

\bibitem[\protect\citeauthoryear{Cerutti, Giacomin, and Vallati}{Cerutti
  et~al\mbox{.}}{2014}]{CeruttiGV14a}
{\sc Cerutti, F.}, {\sc Giacomin, M.}, {\sc and} {\sc Vallati, M.} 2014.
\newblock {ArgSemSAT:} solving argumentation problems using {SAT}.
\newblock In {\em Proceedings of the 5th International Conference on
  Computational Models of Argument (COMMA 2014)}, {S.~Parsons}, {N.~Oren},
  {C.~Reed}, {and} {F.~Cerutti}, Eds. FAIA, vol. 266. {IOS} Press, 455--456.

\bibitem[\protect\citeauthoryear{Cerutti, Oren, Strass, Thimm, and
  Vallati}{Cerutti et~al\mbox{.}}{2014}]{CeruttiOSTV14}
{\sc Cerutti, F.}, {\sc Oren, N.}, {\sc Strass, H.}, {\sc Thimm, M.}, {\sc and}
  {\sc Vallati, M.} 2014.
\newblock A benchmark framework for a computational argumentation competition.
\newblock In {\em Proceedings of the 5th International Conference on
  Computational Models of Argument (COMMA 2014)}, {S.~Parsons}, {N.~Oren},
  {C.~Reed}, {and} {F.~Cerutti}, Eds. FAIA, vol. 266. {IOS} Press, 459--460.

\bibitem[\protect\citeauthoryear{Charwat, Dvor{\'{a}}k, Gaggl, Wallner, and
  Woltran}{Charwat et~al\mbox{.}}{2015}]{CharwatDGWW15}
{\sc Charwat, G.}, {\sc Dvor{\'{a}}k, W.}, {\sc Gaggl, S.~A.}, {\sc Wallner,
  J.~P.}, {\sc and} {\sc Woltran, S.} 2015.
\newblock Methods for solving reasoning problems in abstract argumentation -
  {A} survey.
\newblock {\em Artificial Intelligence\/}~{\em 220}, 28--63.

\bibitem[\protect\citeauthoryear{Dimopoulos and Torres}{Dimopoulos and
  Torres}{1996}]{DimopoulosT1996}
{\sc Dimopoulos, Y.} {\sc and} {\sc Torres, A.} 1996.
\newblock {Graph Theoretical Structures in Logic Programs and Default
  Theories}.
\newblock {\em Theoretical Computer Science\/}~{\em 170,\/}~1-2, 209--244.

\bibitem[\protect\citeauthoryear{Dung}{Dung}{1995}]{Dung1995}
{\sc Dung, P.~M.} 1995.
\newblock {On the Acceptability of Arguments and its Fundamental Role in
  Nonmonotonic Reasoning, Logic Programming and n-Person Games}.
\newblock {\em Artificial Intelligence\/}~{\em 77,\/}~2, 321--358.

\bibitem[\protect\citeauthoryear{Dung, Mancarella, and Toni}{Dung
  et~al\mbox{.}}{2007}]{DungMT07}
{\sc Dung, P.~M.}, {\sc Mancarella, P.}, {\sc and} {\sc Toni, F.} 2007.
\newblock {Computing Ideal Sceptical Argumentation}.
\newblock {\em Artificial Intelligence\/}~{\em 171,\/}~10-15, 642--674.

\bibitem[\protect\citeauthoryear{Dunne and Bench-Capon}{Dunne and
  Bench-Capon}{2002}]{DunneB02}
{\sc Dunne, P.~E.} {\sc and} {\sc Bench-Capon, T. J.~M.} 2002.
\newblock Coherence in finite argument systems.
\newblock {\em Artificial Intelligence\/}~{\em 141,\/}~1/2, 187--203.

\bibitem[\protect\citeauthoryear{Dunne and Caminada}{Dunne and
  Caminada}{2008}]{CaminadaD2008}
{\sc Dunne, P.~E.} {\sc and} {\sc Caminada, M. W.~A.} 2008.
\newblock {Computational Complexity of Semi-Stable Semantics in Abstract
  Argumentation Frameworks}.
\newblock In {\em Proceedings of the 11th European Conference on Logics in
  Artificial Intelligence (JELIA 2008)}, {S.~H\"{o}lldobler}, {C.~Lutz}, {and}
  {H.~Wansing}, Eds. LNCS, vol. 5293. Springer, 153--165.

\bibitem[\protect\citeauthoryear{Dunne, Dvo\v{r}{\'a}k, Linsbichler, and
  Woltran}{Dunne et~al\mbox{.}}{2014}]{DunneDLW14}
{\sc Dunne, P.~E.}, {\sc Dvo\v{r}{\'a}k, W.}, {\sc Linsbichler, T.}, {\sc and}
  {\sc Woltran, S.} 2014.
\newblock Characteristics of multiple viewpoints in abstract argumentation.
\newblock In {\em Proceedings of the 14th International Conference on
  Principles of Knowledge Representation and Reasoning (KR 2014)}, {C.~Baral},
  {G.~{De Giacomo}}, {and} {T.~Eiter}, Eds. AAAI Press, 72--81.

\bibitem[\protect\citeauthoryear{Dunne and Wooldridge}{Dunne and
  Wooldridge}{2009}]{DunneW09}
{\sc Dunne, P.~E.} {\sc and} {\sc Wooldridge, M.} 2009.
\newblock {Complexity of Abstract Argumentation}.
\newblock In {\em Argumentation in Artificial Intelligence}, {G.~Simari} {and}
  {I.~Rahwan}, Eds. Springer US, 85--104.

\bibitem[\protect\citeauthoryear{{Dvo\v r\'ak} and Woltran}{{Dvo\v r\'ak} and
  Woltran}{2010}]{DvorakW10}
{\sc {Dvo\v r\'ak}, W.} {\sc and} {\sc Woltran, S.} 2010.
\newblock Complexity of semi-stable and stage semantics in argumentation
  frameworks.
\newblock {\em Information Processing Letters\/}~{\em 110,\/}~11, 425--430.

\bibitem[\protect\citeauthoryear{Dvo\v{r}\'{a}k, Gaggl, Wallner, and
  Woltran}{Dvo\v{r}\'{a}k et~al\mbox{.}}{2013}]{DvorakGWW2013}
{\sc Dvo\v{r}\'{a}k, W.}, {\sc Gaggl, S.~A.}, {\sc Wallner, J.~P.}, {\sc and}
  {\sc Woltran, S.} 2013.
\newblock Making use of advances in answer-set programming for abstract
  argumentation systems.
\newblock In {\em Proceedings of the 19th International Conference on
  Applications of Declarative Programming and Knowledge Management (INAP 2011),
  Revised Selected Papers}, {H.~Tompits}, {S.~Abreu}, {J.~Oetsch},
  {J.~P\"{u}hrer}, {D.~Seipel}, {M.~Umeda}, {and} {A.~Wolf}, Eds. LNAI, vol.
  7773. Springer, 114--133.

\bibitem[\protect\citeauthoryear{Dvo\v{r}\'{a}k, J\"{a}rvisalo, Wallner, and
  Woltran}{Dvo\v{r}\'{a}k et~al\mbox{.}}{2014}]{DvorakJWW2014}
{\sc Dvo\v{r}\'{a}k, W.}, {\sc J\"{a}rvisalo, M.}, {\sc Wallner, J.~P.}, {\sc
  and} {\sc Woltran, S.} 2014.
\newblock {Complexity-sensitive decision procedures for abstract
  argumentation}.
\newblock {\em Artificial Intelligence\/}~{\em 206}, 53--78.

\bibitem[\protect\citeauthoryear{Egly, Gaggl, and Woltran}{Egly
  et~al\mbox{.}}{2010}]{EglyGW2010}
{\sc Egly, U.}, {\sc Gaggl, S.~A.}, {\sc and} {\sc Woltran, S.} 2010.
\newblock Answer-set programming encodings for argumentation frameworks.
\newblock {\em Argument \& Computation\/}~{\em 1,\/}~2, 147--177.

\bibitem[\protect\citeauthoryear{Gebser, Kaminski, Kaufmann, Lindauer,
  Ostrowski, Romero, Schaub, and Thiele}{Gebser
  et~al\mbox{.}}{2015}]{PotasscoUserGuide}
{\sc Gebser, M.}, {\sc Kaminski, R.}, {\sc Kaufmann, B.}, {\sc Lindauer, M.},
  {\sc Ostrowski, M.}, {\sc Romero, J.}, {\sc Schaub, T.}, {\sc and} {\sc
  Thiele, S.} 2015.
\newblock {\em Potassco User Guide\/}, Second edition ed.

\bibitem[\protect\citeauthoryear{Gebser, Kaminski, and Schaub}{Gebser
  et~al\mbox{.}}{2011}]{GebserKS11}
{\sc Gebser, M.}, {\sc Kaminski, R.}, {\sc and} {\sc Schaub, T.} 2011.
\newblock Complex optimization in answer set programming.
\newblock {\em Theory and Practice of Logic Programming\/}~{\em 11,\/}~4-5,
  821--839.

\bibitem[\protect\citeauthoryear{Gelfond and Lifschitz}{Gelfond and
  Lifschitz}{1991}]{GelfondL91}
{\sc Gelfond, M.} {\sc and} {\sc Lifschitz, V.} 1991.
\newblock Classical negation in logic programs and disjunctive databases.
\newblock {\em New Generation Computing\/}~{\em 9,\/}~3/4, 365--386.

\bibitem[\protect\citeauthoryear{Lifschitz and Turner}{Lifschitz and
  Turner}{1994}]{LifschitzT94}
{\sc Lifschitz, V.} {\sc and} {\sc Turner, H.} 1994.
\newblock Splitting a logic program.
\newblock In {\em Proceedings of the 11th International Conference on Logic
  Programming (ICLP 1994)}, {P.~V. Hentenryck}, Ed. {MIT} Press, 23--37.

\bibitem[\protect\citeauthoryear{Rahwan and Simari}{Rahwan and
  Simari}{2009}]{RahwanS2009}
{\sc Rahwan, I.} {\sc and} {\sc Simari, G.~R.}, Eds. 2009.
\newblock {\em {Argumentation in Artificial Intelligence}}.
\newblock Springer.

\bibitem[\protect\citeauthoryear{Syrj{\"a}nen}{Syrj{\"a}nen}{2009}]{syrjanen09a}
{\sc Syrj{\"a}nen, T.} 2009.
\newblock Logic programs and cardinality constraints: Theory and practice.
\newblock Ph.D. thesis, Aalto University.

\bibitem[\protect\citeauthoryear{Toni and Sergot}{Toni and
  Sergot}{2011}]{Toni11}
{\sc Toni, F.} {\sc and} {\sc Sergot, M.} 2011.
\newblock Argumentation and answer set programming.
\newblock In {\em Logic Programming, Knowledge Representation, and Nonmonotonic
  Reasoning: Essays in Honor of Michael Gelfond}, {M.~Balduccini} {and} {T.~C.
  Son}, Eds. LNCS, vol. 6565. Springer, 164--180.

\bibitem[\protect\citeauthoryear{Vallati, Cerutti, and Giacomin}{Vallati
  et~al\mbox{.}}{2014}]{VallatiCG14}
{\sc Vallati, M.}, {\sc Cerutti, F.}, {\sc and} {\sc Giacomin, M.} 2014.
\newblock Argumentation frameworks features: an initial study.
\newblock In {\em Proceedings of the 21st European Conference on Artificial
  Intelligence (ECAI 2014)}, {T.~Schaub}, {G.~Friedrich}, {and}
  {B.~O'Sullivan}, Eds. FAIA, vol. 263. {IOS} Press, 1117--1118.

\bibitem[\protect\citeauthoryear{Verheij}{Verheij}{1996}]{Verheij96}
{\sc Verheij, B.} 1996.
\newblock {Two Approaches to Dialectical Argumentation: Admissible Sets and
  Argumentation Stages}.
\newblock In {\em Proceedings of the Eighth Dutch Conference on Artificial
  Intelligence (NAIC'96)}, {J.-J.~C. Meyer} {and} {L.~van~der Gaag}, Eds.
  357--368.

\end{thebibliography}
\end{document}